\newcommand{\deepcover}{{\sc d}eep{\sc c}over\xspace}
\newcolumntype{L}{>{\raggedright\arraybackslash}X}
\newcommand{\lab}[1]{\textquotesingle{#1}\textquotesingle}
\Crefname{appsec}{appendix}{appendices}
\theoremstyle{plain}
\newtheorem{theorem}{Theorem}[section]
\newtheorem{proposition}[theorem]{Proposition}
\newtheorem{lemma}[theorem]{Lemma}
\theoremstyle{definition}
\newtheorem{definition}[theorem]{Definition}
\theoremstyle{remark}
\theoremstyle{plain}
\newtheorem{observation}[theorem]{Observation}
\DeclareRobustCommand*\cal{\@fontswitch\relax\mathcal}
\newcommand{\cet}{\textsc{\sc r}e{\sc x}\xspace}
\newcommand{\xai}{\textsc{XAI}\xspace}
\newcommand{\gradcam}{\textsc{Grad-CAM}\xspace}
\newcommand{\lime}{\textsc{lime}\xspace}
\newcommand{\plainshap}{{\sc shap}\xspace}
\newcommand{\shap}{\textsc{g}radient\plainshap}
\newcommand{\rise}{\textsc{rise}\xspace}
\newcommand{\extremal}{\textsc{extremal}\xspace}
\newcommand{\rex}{\textsc{\sc r}e{\sc x}\xspace}
\newcommand{\noise}{\textsc{n}oise\textsc{t}unnel\xspace}
\newcommand{\pytorch}{PyTorch\xspace}
\newcommand{\imagenet}{ImageNet\xspace}
\newcommand{\ks}{\textsc{KernelShap}\xspace}
\newcommand{\gs}{\textsc{GradientShap}\xspace}
\newcommand{\ie}{\emph{i.e.}\xspace}
\newcommand{\rap}{\textsc{rap}\xspace}
\newcommand{\lrp}{\textsc{lrp}\xspace}
\newcommand{\gbp}{\textsc{gbp}\xspace}
\newcommand{\ig}{\textsc{ig}\xspace}
\newcommand{\commentout}[1]{}
\newcommand{\Scal}{{\cal S}}
\newcommand{\U}{{\cal U}}
\newcommand{\cF}{{\cal F}}
\newcommand{\V}{{\cal V}}
\newcommand{\R}{{\cal R}}
\newcommand{\sat}{\models}
\newcommand{\fpa}{$\mbox{FP}^{{\rm A}[\log{n}]}$\xspace}
\newcommand{\fp}{$\mbox{FP}^{{\rm NP}[\log{n}]}$\xspace}
\newcommand{\lem}{\begin{lemma}}
\newcommand{\elem}{\end{lemma}}
\newcommand{\pro}{\begin{proposition}}
\newcommand{\epro}{\end{proposition}}
\newcommand{\dfn}{\begin{definition}}
\newcommand{\edfn}{\end{definition}}
\newtheorem{example}{Example}
\newcommand{\xam}{\begin{example}}
\newcommand{\exam}{\end{example}}
\newcommand{\eg}{e.g.\xspace}
\newcommand{\?}{\stackrel{?}{=}}
\pgfplotsset{compat=1.16}
\definecolor{tabutter}{rgb}{0.98824, 0.91373, 0.30980}          
\definecolor{ta2butter}{rgb}{0.92941, 0.83137, 0}               
\definecolor{ta3butter}{rgb}{0.76863, 0.62745, 0}               
\definecolor{taorange}{rgb}{0.98824, 0.68627, 0.24314}          
\definecolor{ta2orange}{rgb}{0.96078, 0.47451, 0}               
\definecolor{ta3orange}{rgb}{0.80784, 0.36078, 0}               
\definecolor{tachocolate}{rgb}{0.91373, 0.72549, 0.43137}       
\definecolor{ta2chocolate}{rgb}{0.75686, 0.49020, 0.066667}     
\definecolor{ta3chocolate}{rgb}{0.56078, 0.34902, 0.0078431}    
\definecolor{tachameleon}{rgb}{0.54118, 0.88627, 0.20392}       
\definecolor{ta2chameleon}{rgb}{0.45098, 0.82353, 0.086275}     
\definecolor{ta3chameleon}{rgb}{0.30588, 0.60392, 0.023529}     
\definecolor{taskyblue}{rgb}{0.44706, 0.56078, 0.81176}         
\definecolor{ta2skyblue}{rgb}{0.20392, 0.39608, 0.64314}        
\definecolor{ta3skyblue}{rgb}{0.12549, 0.29020, 0.52941}        
\definecolor{taplum}{rgb}{0.67843, 0.49804, 0.65882}            
\definecolor{ta2plum}{rgb}{0.45882, 0.31373, 0.48235}           
\definecolor{ta3plum}{rgb}{0.36078, 0.20784, 0.4}               
\definecolor{tascarletred}{rgb}{0.93725, 0.16078, 0.16078}      
\definecolor{ta2scarletred}{rgb}{0.8, 0, 0}                     
\definecolor{ta3scarletred}{rgb}{0.64314, 0, 0}                 
\definecolor{taaluminium}{rgb}{0.93333, 0.93333, 0.92549}       
\definecolor{ta2aluminium}{rgb}{0.82745, 0.84314, 0.81176}      
\definecolor{ta3aluminium}{rgb}{0.72941, 0.74118, 0.71373}      
\definecolor{tagray}{rgb}{0.53333, 0.54118, 0.52157}            
\definecolor{ta2gray}{rgb}{0.33333, 0.34118, 0.32549}           
\definecolor{ta3gray}{rgb}{0.18039, 0.20392, 0.21176}           
\begin{document}

\title[Causal Explanations for Image Classifiers]{Causal Explanations for Image Classifiers}

\author{Hana Chockler}
\authornote{Corresponding Author.}
\orcid{0000-0003-1219-0713}
\email{hana.chockler@kcl.ac.uk}
\affiliation{%
  \institution{King's College London}
  \city{London}
  \country{UK}
}
       
\author{David A. Kelly}
\orcid{0000-0002-5368-6769}
\email{david.a.kelly@kcl.ac.uk}
\affiliation{%
  \institution{King's College London}
  \city{London}
  \country{UK}
}
 
\author{Daniel Kroening}
\authornote{The work reported in this paper was done prior to joining Amazon.}
\orcid{0000-0002-6681-5283}
\email{dkr@amazon.com}
\affiliation{%
  \institution{Amazon.com, Inc.}
    \city{}
  \country{USA}
}
 
\author{Youcheng Sun}
\orcid{0000-0002-1893-6259}
\email{youcheng.sun@mbzuai.ac.ae}
\affiliation{%
  \institution{Mohamed bin Zayed University of Artificial Intelligence}
  \city{Abu Dhabi}
  \country{UAE}
  }
  \affiliation{
  \institution{University of Manchester}
  \city{Manchester}
  \country{UK}
}

\renewcommand{\shortauthors}{Chockler, Kelly, Kroening \& Sun}


\begin{abstract}
Existing algorithms for explaining the output of image classifiers 
use different definitions of explanations and a variety of techniques to find them. However, none of the existing
tools use a principled approach based on formal definitions of 
cause and explanation. 

In this paper we present a novel black-box approach to
computing explanations grounded in the theory of actual
causality. We prove relevant theoretical results and
present an algorithm for computing approximate explanations
based on these definitions. We prove termination of our
algorithm and discuss its complexity and the amount of
approximation compared to the precise definition.

We implemented the framework in a tool, \rex, and we present
experimental results and a comparison with state-of-the-art tools. 
We demonstrate that \rex is the most efficient black-box tool and produces the smallest explanations, in addition to outperforming
other black-box tools on standard quality measures. 
\end{abstract}

\received{13 February 2026}
\received[accepted]{25 March 2026}


\maketitle

\section{Introduction}
\label{sec:introduction}

\begin{figure}[htb]
    \centering
    \begin{subfigure}{0.2\linewidth}
        \centering
        \includegraphics[scale=0.35]{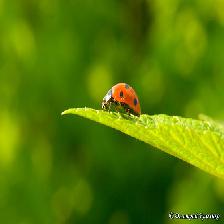}
        \caption{Ladybug}
        \label{fig:accept:ladybird}
    \end{subfigure}
    \hfill
    \begin{subfigure}{0.2\linewidth}
        \centering
        \includegraphics[scale=0.35]{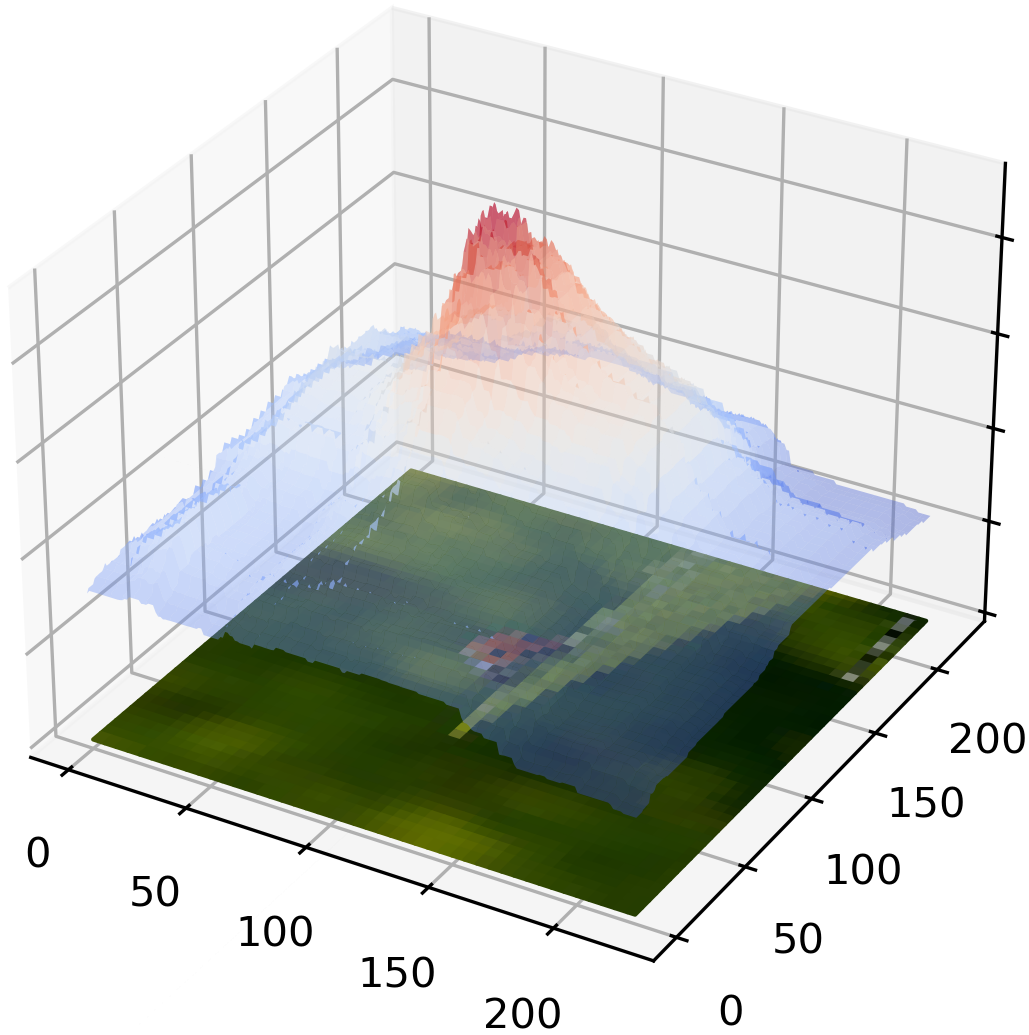}
        \caption{Responsibility}
        \label{fig:accept:surface}
    \end{subfigure}
    \hfill
    \begin{subfigure}{0.2\linewidth}
        \centering
        \includegraphics[scale=0.3]{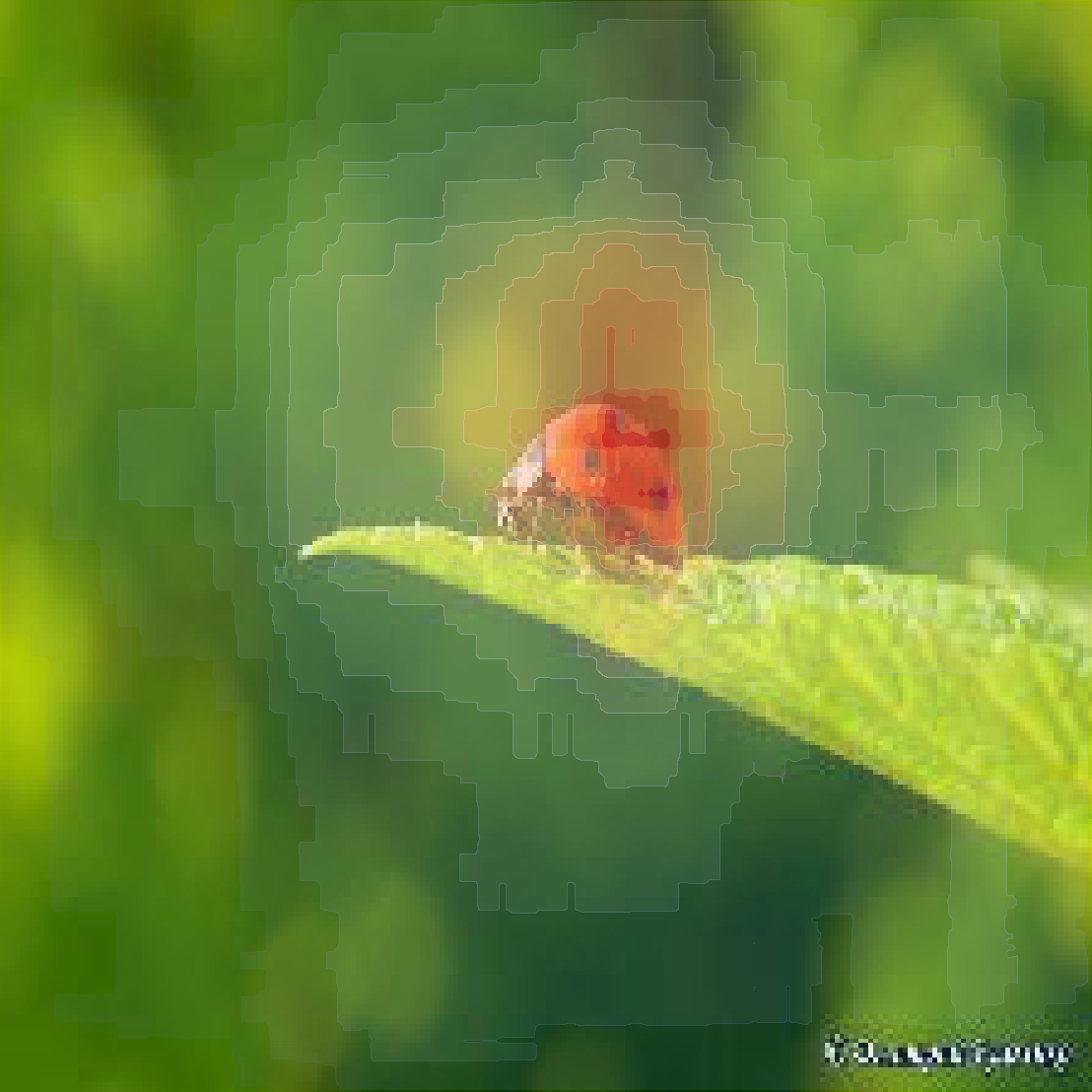}
        \caption{Heat map}
        \label{fig:accept:heat}
    \end{subfigure}
    \hfill
    \begin{subfigure}{0.2\linewidth}
        \centering
        \includegraphics[scale=0.35]{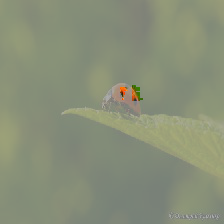}
        \caption{Explanation}
        \label{fig:accept:exp}
    \end{subfigure}
    \caption{A ladybug~(\subref{fig:accept:ladybird}), its responsibility map~(\subref{fig:accept:surface}), 
    the heat map~(\subref{fig:accept:heat}), which is a projection of
    the responsibility map on a plane overlaid on the original image, and a causal explanation~(\subref{fig:accept:exp}). 
    The minimal causal explanation computed by our tool \rex is less than $1\%$ of the image.}
    \label{fig:ladybird}
\end{figure}

Neural networks (NNs) are now a primary building block of many computer
vision systems. NNs are complex non-linear functions with algorithmically
generated coefficients. In contrast to traditionally
engineered image processing pipelines, it is difficult to retrace how the
pixel data are interpreted by the layers of the network.
Moreover, in many application areas, in particular healthcare,
the networks are proprietary, making the analysis of the internal layers impossible.
The ``black box" nature of neural networks creates demand for e\underline{X}plainable \underline{AI} (\xai) techniques that show
why a particular input yields the observed output without understanding the model's parameters and their influence on that output.

An explanation of an output of an automated procedure is essential in many areas, including verification,
planning, diagnosis and the like. An informative explanation may alter a user's confidence in the result. Explanations are also useful for determining whether there is a fault in the automated procedure:
if the explanation does not make sense, it may indicate that the procedure is faulty.
As argued by~\citet{bhusalface}, it is important to recognize that a ``good'' explanation should reveal what information a model is using to make its decision, even if that information is not intuitive to a human. 

There have been a number of definitions of explanations over the years in various domains of computer science~\citep{CH97,Gar88,Pea88}, philosophy~\citep{Hem65} and statistics~\citep{Sal89}. Black-box explanations for the results of image classifiers are typically based on or  are given in the form of a \emph{ranking} of the pixels, which is a numerical measure of importance: the higher the score, the more important the pixel is for the NN's classification outcome. Often these rankings are presented in a form of a \emph{heat map}, with higher scores corresponding to hotter areas.

This paper addresses the following research questions:
\begin{description}
    \item[RQ1] What is a formal, rigorous definition of explanation, suitable for analyzing image classifiers? 
    \item[RQ2] Can we compute explanations based on the definition above without opening the black box? What is the complexity of such computation?
    \item[RQ3] Is there an efficiently computable approximation for explanations? 
    \item[RQ4] What are suitable quality measures for explanations? 
    \item[RQ5] What is the quality of explanations computed by our algorithms compared to other \xai methods?
    \item[RQ6] Is there a trade-off between quality and compute cost of the explanations?
    \item[RQ7] Can black-box methods achieve the same quality of explanations as white-box methods?
\end{description}

Our algorithms are based on the formal definition of explanation in the theory of actual causality by~\citet{Hal19} and
its adaptation to image classifiers by~\citet{CH24}. Essentially, an explanation in the context of image classifiers is
a minimal subset of pixels with their original values that is sufficient for obtaining the same classification as the
original image, given that the rest of the image has its values replaced with data from a predefined dataset. We view a
black-box image classifier (a neural network) as a \emph{causal model} (\Cref{sec:cause}) of depth $2$ (that is, one
layer of input nodes, and one internal layer), with the classification itself computed in the single internal node, thus
capturing the opacity of the network. The algorithm (\Cref{sec:algorithm}) then calculates an approximate explanation,
where the approximation is in the size of the explanation (\Cref{fig:ladybird}). That is, an explanation computed by our
algorithm is sufficient for the classification, but it might not be minimal, though it is very close to minimal on real
images: in our experiments, explanations average between $3\%-5\%$ of the image, which is smaller than any of the other
black-box tools. Size is a useful metric here as it indicates the degree of extraneous information contained in the
explanation. We formally prove termination and complexity of our algorithm and discuss its approximation ratio.

We compare the experimental results of our implementation of the algorithm in a tool, \rex, with those of other \xai tools, both white-box and black-box, on a number of benchmark datasets: 
\imagenet, VOC2012, ECSSD, and a set of partially occluded images. Our results show that \rex is on par with the most efficient black-box tools in terms of execution time, while producing the smallest explanations. We also analyze the explanations using the
standard measures of insertion and deletion curves~\citep{petsiuk2018rise} and the intersection with the background or the occlusion. The results show that \rex
outperforms the other tools on insertion curves and on the overlap with irrelevant parts of the image. We discuss the logic behind deletion curves and
demonstrate that low deletion is a measure of the quality of the model for particular types of images, rather than for the quality of an explanation (\Cref{subsec:discussion}).
We also argue that explanations computed by \rex are so small that any further reduction in their size is not useful~\citep{kelly2025big}.

The rest of the paper is organized as follows. \Cref{sec:related} presents an overview of the related work on black-box explainable AI. \Cref{sec:cause} gives the
necessary background on actual causality. \Cref{sec:theory} provides the theoretical foundations of our approach to explainability. \Cref{sec:algorithm} is a detailed
overview of our algorithm, with the evaluation results discussed in \Cref{sec:evaluation}.  The tool \rex is
open source and available at \url{https://github.com/ReX-XAI/ReX}. All models and benchmark sets are publicly available and referenced in \Cref{sec:evaluation}.

\section{Related Work}
\label{sec:related}

The taxonomy of \xai methods and styles is large~\citep{schwalbe2024comprehensive}. Our method and its implementation in a tool, \rex, produces local, post hoc, black-box causal explanations of image classifications. By local, we mean that the explanation applies only to one image, not to all images with the same classification. This is in contrast to a global explanation, which explains an entire class.
Post hoc means that the method is applied to a model which is not inherently explainable and which has not been designed with explainability in mind. By black-box, we mean that there is no access at all to model architecture, gradients, or internals. An \xai method with free access to any layer of the model is \emph{white-box}. An \xai tool with access to discretionary information, such as the gradient, we term \emph{grey-box}. A proprietary system need not expose any information to the user other than the top-$1$ class. As our primary interest is in black-box, post hoc approaches, we only give a brief overview of other algorithms here. 

The existing approaches to post hoc explainability can largely be grouped into two categories: propagation and perturbation. 
Propagation-based explanation methods are usually more computationally efficient. They back-propagate a model's decision to the input layer to determine the weight of each input feature for the decision. \gradcam~\citep{CAM} only needs one backward pass and propagates the class-specific gradient into the final convolutional layer of a neural network to coarsely highlight important regions of an input image. 
Guided Back-Propagation (\gbp)~\citep{springenberg2015striving} computes the single and average partial derivatives of the output to attribute the prediction of a neural network. \lrp~\citep{LRP} uses layer-wise back-propagation applied to all layers of the model. The model's output is decomposed into different degrees of relevance by applying different rules for each layer.

In contrast to propagation-based explanation methods, perturbation-based explanation approaches explore the input space directly
in search of an explanation. The exploration/search often requires a large number of inference passes, which may incur significant computational cost, especially when compared to propagation methods. There are many ways to create these perturbations, most of which 
are based on random search or heuristics. The chief advantage of perturbation-based methods is that they are 
model independent and do not, in general, require any access to model internals.

Integrated Gradients (\ig)~\citep{sundararajan2017axiomatic} relies only on the model's classification and gradient. As a model does not need to expose the gradient, we consider \ig a \emph{grey-box} method.
It progressively introduces pixels from an image onto an underlying baseline value and computes an explanation based on the effect of these introductions on the output gradient. 

\plainshap (SHapley Additive exPlanations)~\citep{lundberg2017unified} is a family of different techniques, all of which seek to estimate Shapley values~\citep{winter2002shapley}. Shapley values measure the contribution, either positive or negative, of a feature towards an outcome. \shap is a gradient-based method that approximates Shapley values via gradient integration, in a similar fashion to \ig. There are numerous other methods for approximating Shapley values, among which \textsc{KernelShap}~\citep{lundberg2017unified} and \textsc{DeepLiftShap}~\citep{lundberg2017unified} are probably the most used. 
\textsc{DeepLiftShap} uses the \textsc{DeepLift}~\citep{deeplift} method whereas \textsc{KernelShap} uses the \lime framework.

Given a particular input, \lime~\citep{lime} samples its neighborhood and builds an \emph{inherently explainable} model to approximate the system's local behavior. An inherently explainable model for \lime is usually a variant on some linear regression model.
Owing to the high computational cost of this approach, \lime relies on a separate segmentation algorithm to break an image into 
\emph{superpixels}, that is, clusters of connected pixels that share similar characteristics, such as color, intensity, or texture. 
The quality of this initial segmentation is vital for good performance, as \lime does not refine it further~\citep{Knab2024BeyondPE}.
A \lime explanation is therefore necessarily tied to the provided segmentation. 
When this segmentation does not align with relevant features, \lime's performance suffers~\citep{blake2024explainable}. 
For instance, if the segmentation is too large or too crude, \lime's explanations contain irrelevant or unnecessary pixels. 
  
In \rise~\citep{petsiuk2018rise}, the importance of a pixel
is computed as the expectation over all local perturbations conditioned on the event that the pixel is observed. Rather than relying on an initial segmentation, \rise creates random occlusions of the input image. More recently, spectrum-based fault
localization (SBFL) has been applied to explaining image classifiers. The technique has been implemented in the tool \deepcover~\citep{sun2020explaining}.
While not an independent \xai method, \noise~\citep{noisetunnel} adds gaussian noise to the image before applying another \xai method, usually \ig. The noise serves to smooth the resultant saliency map. 

None of the methods mentioned above agree on what constitutes an explanation. \lime produces a locally explainable model, but the user is usually presented with a heat map of the local model's weights. \rise also produces a heat map, but its output is not directly comparable with \lime, due to the lack of a common definition. 
\shap output is also usually visualized as a heatmap, but one with both positive and negative values.

\rex, the method and tool presented in this paper, is a perturbation-based approach and addresses the limitations of existing black-box methods in two aspects. \rex does not rely on an initial segmentation as does \lime, nor does it simply rely on unguided random occlusions, as does \rise. The feature masking in \rex uses causal reasoning to guide the creation of occlusions. \rex output is also different from other methods: \rex shows the user which pixels are sufficient for a classification. \rex tests this sufficiency against the model itself, unlike \lime for example, which uses a locally trained model. Owing to its guided iterative refinement, \rex is computationally efficient, while still producing explanations that are state of the art.

\section{Background on Actual Causality}\label{sec:cause} 

In this section, we briefly review the definitions of causality and causal
models introduced by~\citet{HP01b} and
relevant definitions of causes and explanations in image
classification by~\citet{CH24}. The reader is referred
to~\citet{Hal19} for further reading.

We assume that the world is described in terms of 
variables and their values.  
Some variables may have a causal influence on others. This
influence is modeled by a set of {\em structural equations}.
It is conceptually useful to split the variables into two
sets: the {\em exogenous\/} variables, whose values are
determined by 
factors outside the model, and the {\em endogenous\/} variables, whose values are ultimately determined by
the exogenous variables.  
The structural equations describe how these values are 
determined.

Formally, a \emph{causal model} $M$
is a pair $(\Scal, \cF)$, where $\Scal$ is a \emph{signature}, which explicitly
lists the endogenous and exogenous variables  and characterizes
their possible values, and $\cF$ defines a set of \emph{(modifiable)
structural equations}, relating the values of the variables.  
A signature $\Scal$ is a tuple $(\U,\V,\R)$, where $\U$ is a set of
exogenous variables, $\V$ is a set 
of endogenous variables, and $\R$ associates with every variable $Y \in 
\U \cup \V$ a nonempty set $\R(Y)$ of possible values for 
$Y$ (i.e., the set of values over which $Y$ {\em ranges}).  
For simplicity, we assume here that $\V$ is finite, as is $\R(Y)$ for
every endogenous variable $Y \in \V$.
The set $\cF$ associates with each endogenous variable $X \in \V$ a
function denoted $F_X$
(i.e., $F_X = \cF(X)$) 
that expresses the dependency of $X$ on other variables.
If all variables in $M$ are Boolean, the model is called a \emph{binary model}.

The structural equations define what happens in the presence of external
interventions. 
Setting the value of some variable $X$ to $x$ in a causal
model $M = (\Scal,\cF)$ results in a new causal model, denoted
$M_{X\gets x}$, which is identical to $M$, except that the
equation for $X$ in $\cF$ is replaced by $X = x$.

Some salient features of causal models can be captured by causal graphs, where the nodes represent variables, and the
arrows depict the direction of causality.
A causal model  $M$ is \emph{recursive} if its causal graph is acyclic.
If $M$ is a recursive  causal model,
then given a \emph{context}, that is, a setting $\vec{u}$ for the
exogenous variables in $\U$, the values of all the other variables are
determined.
In this paper, following the convention, we restrict the discussion to recursive models.

We call a pair $(M,\vec{u})$ consisting of a causal model $M$ and a
context $\vec{u}$ a \emph{(causal) setting}.
A causal formula $\psi$ is true or false in a setting.
We write $(M,\vec{u}) \sat \psi$  if
the causal formula $\psi$ is true in
the setting $(M,\vec{u})$.
The $\sat$ relation is defined inductively.
$(M,\vec{u}) \sat X = x$ if
the variable $X$ has value $x$
in the unique solution
to the equations in
$M$ in context $\vec{u}$.
Finally, 
$(M,\vec{u}) \sat [\vec{Y} \gets \vec{y}]\varphi$ if 
$(M_{\vec{Y} = \vec{y}},\vec{u}) \sat \varphi$,
where $M_{\vec{Y}\gets \vec{y}}$ is the causal model that is identical
to $M$, except that the 
variables in $\vec{Y}$ are set to $Y = y$
for each $Y \in \vec{Y}$ and its corresponding 
value $y \in \vec{y}$. The setting of $\vec{Y}$ to $\vec{y}$ effectively cuts the incoming causal transitions into $\vec{Y}$, replacing the variables
in $\vec{Y}$ with the values $\vec{y}$. These values are then propagated to the variables that causally depend on $\vec{Y}$. This change of the values of some
of the variables is called an \emph{intervention}.

A standard use of causal models is to define \emph{actual causation}: that is, 
what it means for some particular event that occurred to cause 
another particular event. 
There have been a number of definitions of actual causation given
for acyclic models
(\eg~\citep{beckers21c,GW07,Hall07,HP01b,Hal19,hitchcock:99,Hitchcock07,Weslake11,Woodward03}).
In this paper, we are focusing on \emph{explainability}, which is traditionally defined using some
form of actual causation. As we argue in the next section, in our setting, the causal models have a
simple structure, and hence all the relevant definitions are simplified. 

\section{Theoretical Foundations of Our Approach}\label{sec:theory}

We approach the first two research questions theoretically.

\begin{description}
    \item[RQ1] What is a formal rigorous definition of explanation, suitable for image classification? 
\end{description}

Given an image classifier (e.g., a neural network) $\mathcal{N}$ and an input image $x$, we define a
binary causal model $M_{\mathcal{N},x}$ as follows. The set $\V = \vec{V} \cup \{O\}$ of endogenous variables consists
of a set $\vec{V}$ corresponding to the set of pixels $P(x)$ of $x$ and the single output variable $O$. 
Essentially, $\vec{V}$ is a \emph{mask}, indicating which pixels of $x$ are visible and which are occluded, and
the output variable $O$ indicates whether the classification of a partially masked image stays the same as of the original image.
To keep in line with the formal definitions of structural causal models, the values of $\vec{V}$ are determined by 
the set of exogenous variables $\U$.

Assigning $1$ to a variable $v_i \in \vec{V}$ means that the pixel $p_i$, corresponding
to $v_i$, has its original value (taken from $x$). Assigning $0$ to this variable means that $p_i$ is masked --
replaced with some predefined masking value. 
The masking operation of $\V$ on $P(x)$ is denoted by $\V \odot P(x)$ and
is the Hadamard product of these sets viewed as matrices of the same size (corresponding to the input size and shape of $\mathcal{N}$).

The context $\vec{u}$ that assigns all variables in $\vec{V}$ the value $1$ (i.e., 
none of the pixels are masked) corresponds to the fully unmasked image $x$. 
The value of $O$ is $1$ iff the output of $\mathcal{N}$ on $\V \odot P(x)$,
the partially masked image defined by applying $\vec{V}$ to $x$, is $\mathcal{N}(x)$ and is $0$ otherwise. 
Clearly, $O=1$ if the image is fully unmasked, that is, $(M_{\mathcal{N},x},\vec{u}) \models (O=1)$.

\begin{figure}[t]
    \centering
    \begin{tikzpicture}[outer sep=auto]
    \node (V) at (-1, 0) {$\vec{V}$};
    \node (v1) at (0, 0) [draw, circle, minimum size=0.9cm,fill=red!10] {$v_1$};
    \node (v2) at (1.5, 0) [draw, circle, minimum size=0.9cm,fill=red!10] {$v_2$};
    \node (v3) at (3, 0) [draw, circle, minimum size=0.9cm,fill=red!10] {$v_3$};
    \node (v4) at (4.5, 0) [draw, circle, minimum size=0.9cm,fill=red!10] {$v_4$};
    \node (v5) at (6, 0) [draw, circle, minimum size=0.9cm,fill=red!10] {$v_5$};
    \node (dots) at (7, 0) {$\mathbf{\cdots}$};
    \node (v6) at (8, 0) [draw, circle, minimum size=0.3cm,fill=red!10] {$v_{n-2}$};
    \node (v7) at (9.5, 0) [draw, circle, minimum size=0.3cm,fill=red!10] {$v_{n-1}$};
    \node (vn) at (11, 0) [draw, circle, minimum size=0.9cm,fill=red!10] {$v_n$};

    \node (f) at (5.5, -2.2) [draw, rectangle, rounded corners, fill=blue!10, inner sep=0.25cm] 
    {$\mathcal{N}(\vec{v} \odot P(x)) \? \mathcal{N}(x)$};

    \draw [-Triangle] (v1) -- (f);
    \draw [-Triangle] (v2) -- (f);
    \draw [-Triangle] (v3) -- (f);
    \draw [-Triangle] (v4) -- (f);
    \draw [-Triangle] (v5) -- (f);
    \draw [-Triangle] (v6) -- (f);
    \draw [-Triangle] (v7) -- (f);
    \draw [-Triangle] (vn) -- (f);
    
    \node (o) at (5.5, -3.8) [draw, circle, minimum size=0.9cm, fill=yellow!10] {$O$};
    \node (oin) at (7, -3.8) {$O \in \{0,1\}$};

    \draw [-Triangle] (f) -- (o);
    \end{tikzpicture}
    \caption{A depth-$2$ binary causal model $M_{\mathcal{N},x}$ for an image $x$ and a classifier $\mathcal{N}$. 
    $\vec{v}$ is the vector of values of $\vec{V}$.
    The output $O \in \{0, 1\}$ indicates
    whether the classification of the Hadamard product of $x$ and $\vec{V}$ is the same as the original classification.}
    \label{fig:MNx}
\end{figure}

We depict the reasoning process of $M_{\mathcal{N},x}$ in \Cref{fig:MNx}, omitting the set of exogenous variables.
The causal model has depth $2$. In what follows, we omit the subscript ${\mathcal{N},x}$ from the causal model notation
if it is clear from the context.
This construction assumes causal independence between the variables in $\vec{V}$. This is common 
to many approaches to causal and counterfactual explainable AI~\citep{USL19,SHG20,PSSBF20,MMTS21,Beckers22,CH24}, and is the \emph{de facto} approach in all black-box explainable AI tools. We argue, however, that this is an accurate depiction in the case
of images, and not just a convenient approximation. Image classification models perform classifications over data, not over what the image represents. This data encodes correlations which are due to the underlying data production method being causal, \ie the real world. Obscuring or permuting pixels does not lead to a cascading change of other pixels values, as one would expect if pixels were causally connected, instead these permutations disrupt correlations in the data only.

Consider the ``Photobombing'' dataset, described in \Cref{sec-exp-setup}. This dataset is comprised of partially obscured images, obtained by overlaying random color patches over Imagenet images. An input from
this dataset is a perfectly valid image. Indeed, obscuring a part of the input image either by introducing an artificial object  or by positioning a real object in front of the primary subject of the classification 
does not lead to any change in unobscured pixels. Thus, pixel independence holds on the general data representation of images.


We refer the reader
to \citet{CH24} for a more in-depth discussion of causal independence between pixel values in
image classification.  
We note that the causal independence assumption may not hold for concept bottleneck models and other models which claim to learn causes and not just correlations~\citep{kohconcept}. For these types of models, assuming independence
may result in approximation which might lead to inaccurate results.

As we discuss below, there is a strong \emph{correlation} between pixels in a given image, which is not surprising given that images usually capture objects in the real
world. This does not contradict our claim of the lack of causal connection between the pixels in the image, due to the argument above.



Given a neural network $\mathcal{N}$ and an input image $x$, let $\vec{u}_1$ be the context
that assigns $1$ to all variables in $\vec{V}$, and let $\vec{u}_0$ be the context
that assigns $0$ to all these variables. 
We introduce the following definition of explanation.

\begin{restatable}[Single-Context Explanation]{definition}{expdef}\label{defn:simple-exp}
A subset $\vec{V}_{exp}$ of $\vec{V}$ is a \emph{single-context explanation} of a classification $\mathcal{N}(x)$ of 
an input image $x$ by a classifier 
$\mathcal{N}$ if the following conditions hold:
\begin{description}
\item[{\rm EXIM1.}] $(M,\vec{u}_0) \models [\vec{V}_{exp} = 1](O=1)$.
\item[{\rm EXIM2.}] $\vec{V}_{exp}$ is minimal; there is no strict subset $\vec{V}'_{exp}$ of $\vec{V}_{exp}$ that satisfies EXIM1.
\end{description}
When there is no confusion, we call \emph{single-context explanation} simply an \emph{explanation}.
As there is a one-to-one correspondence between the variables in $\vec{V}$ and the pixels of $x$, 
we also call the subset of pixels $P_{exp}$ of $x$ that corresponds to $\vec{V}_{exp}$ a \emph{single-context explanation} of $\mathcal{N}(x)$.
\end{restatable}
In other words, an explanation is a minimal subset of pixels of a given input image $x$ that is sufficient for the 
model $\mathcal{N}$ to classify the image, with all other pixels masked.
Note that we do not assume that the classification of a fully masked input by $\mathcal{N}$ is different from $\mathcal{N}(x)$; if they are equal, the (single)
explanation is an empty set.

\Cref{defn:simple-exp} matches other definitions of explanations in the causality landscape. So as not to interrupt the flow for the reader, we define and
prove the relevant equivalences in \Cref{app:exp}. The reader can safely ignore this appendix, as it is not directly related to the methods and algorithms
presented in this paper.

We now return to the task at hand, which is \emph{computing} explanations.
\begin{description}
    \item[RQ2] Can we compute explanations based on the definition above without opening the black box? What is the complexity of such computation?
\end{description}

Unfortunately, the precise computation of explanations is intractable---see \Cref{app:exp} for the proof. 
It is of little surprise, as all the existing definitions of explanations
are intractable, except in some very simple special cases. 
There is, therefore, a justification for an approximation algorithm for explanations. \Cref{sec:algorithm}, described in the next section,  
computes approximate explanations. It uses an approximate ranking of pixels according
to their importance for the classification of the input image as a heuristic to guide explanation discovery. 
We formalize the notion of importance as follows.

\begin{restatable}[Sufficient responsibility]{definition}{respdef}\label{def:simple-resp}
The \emph{degree of sufficient responsibility} of an explanation $\vec{V}_{exp}$ for the classification of $x$
by $\mathcal{N}$ is defined as $1/|\vec{V}_{exp}|$. We also extend this value to all pixels in $\vec{V}_{exp}$. That is, $v_i$ (and its matching pixel $p_i$) 
have the degree of sufficient responsibility $1/|\vec{V}_{exp}|$ for the classification of $x$
by $\mathcal{N}$, where $\vec{V}_{exp}$ is the smallest explanation
for the classification of $x$ that contains $v_i$. If there is no explanation that contains $v_i$, then
its degree of sufficient responsibility is defined as $0$.
\end{restatable}

Similarly to the definition of explanation, \Cref{def:simple-resp} has a match in the existing causality landscape, as we
show in \Cref{app:exp}, where we also prove its intractability.
The following observation describes a brute-force approach to computing sufficient responsibility, which
is clearly exponential in the number of pixels of the image.
\begin{observation}
Given an image $x$ and its classification $o$, we can calculate the degree of sufficient responsibility of each pixel $p_i$ of $x$
by directly applying~\Cref{defn:simple-exp}, that is, by checking the conditions EXIM1 and EXIM2
for all subsets of pixels of $x$. 
\end{observation}

In the next section, we describe an efficient approximation algorithm for computing explanations that is based on an efficiently computable approximate degree
of responsibility.

\section{The \rex Algorithm}
\label{sec:algorithm}

\begin{figure}
\tikzset{
   arrowshadow/.style = { thick, color=black, ->, double copy shadow={thick,shadow xshift=2ex, shadow yshift=2ex}},
}

\begin{tikzpicture}[node distance=5.5cm]


\definecolor{tabutter}{rgb}{0.98824, 0.91373, 0.30980}		
\definecolor{ta2butter}{rgb}{0.92941, 0.83137, 0}		
\definecolor{ta3butter}{rgb}{0.76863, 0.62745, 0}		

\definecolor{taorange}{rgb}{0.98824, 0.68627, 0.24314}		
\definecolor{ta2orange}{rgb}{0.96078, 0.47451, 0}		
\definecolor{ta3orange}{rgb}{0.80784, 0.36078, 0}		

\definecolor{tachocolate}{rgb}{0.91373, 0.72549, 0.43137}	
\definecolor{ta2chocolate}{rgb}{0.75686, 0.49020, 0.066667}	
\definecolor{ta3chocolate}{rgb}{0.56078, 0.34902, 0.0078431}	

\definecolor{tachameleon}{rgb}{0.54118, 0.88627, 0.20392}	
\definecolor{ta2chameleon}{rgb}{0.45098, 0.82353, 0.086275}	
\definecolor{ta3chameleon}{rgb}{0.30588, 0.60392, 0.023529}	

\definecolor{taskyblue}{rgb}{0.44706, 0.56078, 0.81176}		
\definecolor{ta2skyblue}{rgb}{0.20392, 0.39608, 0.64314}	
\definecolor{ta3skyblue}{rgb}{0.12549, 0.29020, 0.52941}	

\definecolor{taplum}{rgb}{0.67843, 0.49804, 0.65882}		
\definecolor{ta2plum}{rgb}{0.45882, 0.31373, 0.48235}		
\definecolor{ta3plum}{rgb}{0.36078, 0.20784, 0.4}		

\definecolor{tascarletred}{rgb}{0.93725, 0.16078, 0.16078}	
\definecolor{ta2scarletred}{rgb}{0.8, 0, 0}			
\definecolor{ta3scarletred}{rgb}{0.64314, 0, 0}			

\definecolor{taaluminium}{rgb}{0.93333, 0.93333, 0.92549}	
\definecolor{ta2aluminium}{rgb}{0.82745, 0.84314, 0.81176}	
\definecolor{ta3aluminium}{rgb}{0.72941, 0.74118, 0.71373}	

\definecolor{tagray}{rgb}{0.53333, 0.54118, 0.52157}		
\definecolor{ta2gray}{rgb}{0.33333, 0.34118, 0.32549}		
\definecolor{ta3gray}{rgb}{0.18039, 0.20392, 0.21176}		

\definecolor{pantonegreen}{HTML}{669900}
\definecolor{pantoneblue}{HTML}{0066CC}
\definecolor{pantonered}{HTML}{CC3333}
\definecolor{pantoneorange}{HTML}{FFCC33}

  \node (img) {\includegraphics[width=2cm]{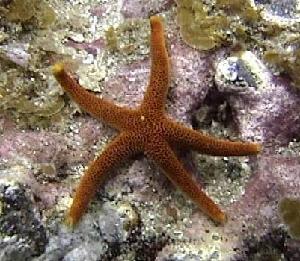}};

  \node[rectangle, rounded corners, top color=pantonegreen!30, bottom color=pantonegreen!30, minimum height=1.75cm, right =0.3cm of img, align=center, draw=ta2chameleon] (ra) {{\large\ding{172}}\\Causal\\Ranking\\ Algorithm};

  \node[right=0.25cm of ra] (map) {\includegraphics[width=2.5cm]{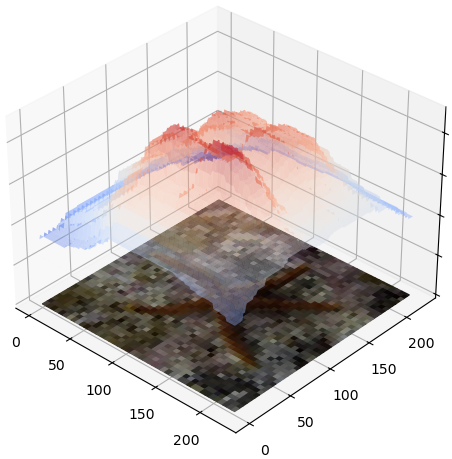}};
  \draw[->, thick] (img) -- (ra);

   \begin{scope}[on background layer]
            \draw[rounded corners, fill=red!10] (1.2, 1.5) rectangle ++(5.25, -3);
        \end{scope}

  \node[rectangle, rounded corners, align=center, draw=black!50!pantoneorange, fill=pantoneorange!30, minimum height=2cm, right=1cm of map] (extract) {\large\ding{173}\\Explanation\\extraction};

  \node[right=0.25cm of extract] (exp) {\includegraphics[width=2cm]{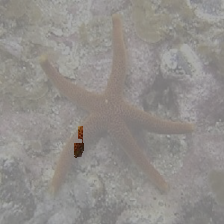}};

  \begin{scope}[on background layer]
            \draw[rounded corners, fill=green!10] (7, 1.5) rectangle ++(4.8, -3);
        \end{scope}
  \draw[->, thick] (map) -- (extract);

  \node[inner sep=0pt, right=0.5cm of exp] (final) {\includegraphics[scale=0.08]{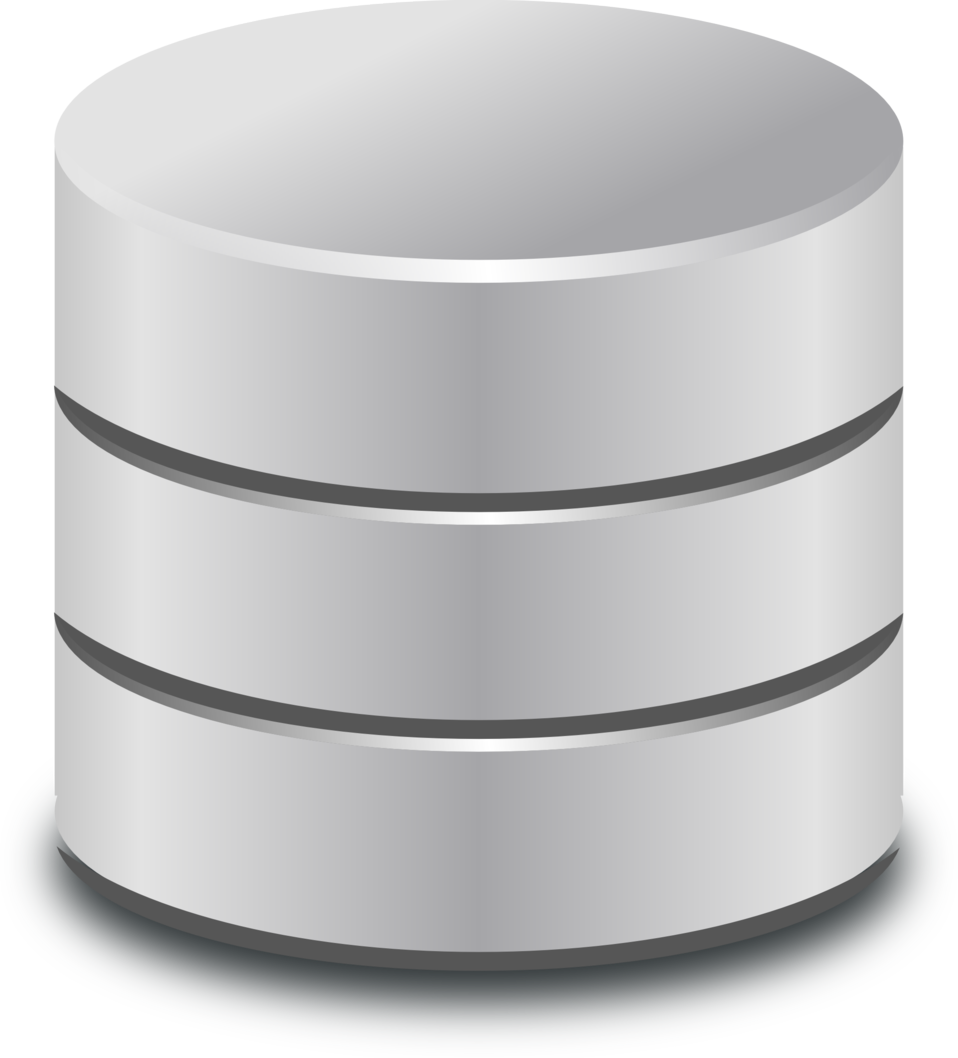}};
  \draw[->, thick] (exp) -- (final);

\end{tikzpicture}
\caption{High level overview of \Cref{algo:compositonal_explanation}. The causal ranking algorithm produces an approximate 
responsibility map (\ding{172}). The pixels in the image are then ordered by
their approximate responsibility, and the explanation extraction algorithm uses this ranking to produce an approximately minimal 
sufficient explanation (\ding{173}), which captures the information required for the DNN to give the classification
(\lab{starfish} in this example).}%
\label{fig:algo}
\end{figure}

In this section we introduce our algorithm for computing approximate explanations. The high-level view of the algorithm is
presented in \Cref{fig:algo}. The algorithm consists of two independent parts. The first part ranks
pixels according to their approximate degree of sufficient responsibility (\Cref{def:simple-resp}); the second part is a 
\emph{greedy algorithm} that extracts (approximate) explanations from this ranking.

The algorithm for computing an approximate degree of responsibility is based on the notion of a \emph{superpixel} $P_i$.
We slightly abuse the definition of a superpixel as used in \lime, where it refers to pixels which are clustered via some segmentation algorithm. In our context, however, it is simply some subset of
pixels of a given image which have not been masked. \rex does not rely on any externally provided initial segmentation. 
The algorithm instead partitions an input image into a small number of rectilinear superpixels, computes their degree of sufficient responsibility 
for the output, and then \emph{iteratively refines} the superpixels with responsibility exceeding some predefined threshold. 

The computation of the degree of sufficient responsibility of each superpixel for the output is precise and is described in \Cref{sec:responsibility}. 
The approximation comes from
the iterative refinement step in the algorithm, described in \Cref{sec:refine_responsibility}, and is due to the fact that the final ranking of
pixels depends on the selected partition. To ameliorate the effect of a particular partition, the algorithm is repeated a number of times
with partitions selected independently at random, and the results are averaged across all the partitions. \Cref{sec:compositional_explanation}
describes the averaging process and the greedy construction of an approximate sufficient explanation from the ranked list of pixels.
Section~\ref{sec:exp0} gives a step-by-step example to illustrate the working of the algorithm.

The scalability of the approach relies on the following observation. 
\begin{observation}~\label{obs:highest}
The pixels with the highest responsibility for the model's decision are located in superpixels
with the highest responsibility.
\end{observation}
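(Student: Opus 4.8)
The plan is to argue the observation by lifting the responsibility notion of \Cref{simple-resp} from individual pixels to super-pixels and then establishing a containment between the two rankings. First I would make the target statement precise. For a super-pixel $S$ I define its witness exactly as in \Cref{simple-cause}, but with $S$ playing the role of the singleton $p_i$ and with the witness set drawn from the remaining super-pixels: masking the super-pixels in the witness preserves the classification $o$, while additionally masking all of $S$ flips it. Its degree of responsibility is then $1/(k'+1)$, where $k'$ is the size of the smallest such super-pixel witness. The observation then asserts a correspondence between rankings: every pixel attaining the top pixel-level responsibility lies in a super-pixel attaining the top super-pixel-level responsibility.

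The core of the forward direction is a covering argument. Suppose $p_i$ has high responsibility, so by \Cref{simple-resp} it admits a small pixel-witness $P_j$ with $|P_j| = k$. Let $S_a$ be the super-pixel containing $p_i$, and let $\{S_{b_1},\dots,S_{b_t}\}$ be the super-pixels other than $S_a$ that intersect $P_j$. Since masking $S_{b_1}\cup\cdots\cup S_{b_t}$ masks a superset of $P_j$, and masking $(S_{b_1}\cup\cdots\cup S_{b_t})\cup S_a$ masks a superset of $P_j\cup\{p_i\}$, these super-pixels form a super-pixel witness for $S_a$ under the assumption that masking supersets of a non-flipping (resp.\ flipping) configuration remains non-flipping (resp.\ flipping). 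This yields $k' \le t$, and because a small $k$ forces $P_j$ to be concentrated in few super-pixels, $t$ is small, so $S_a$ inherits a high super-pixel responsibility.

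The hard part is that the covering step silently uses a \emph{monotonicity/coherence} assumption on the classifier $\network$ that need not hold for a genuine black box: masking additional pixels of $S_a$ or of the $S_{b_j}$ beyond $P_j$ can move the input across the decision boundary in either direction, so a super-pixel witness need not exist at size $t$, and a high-responsibility pixel can in principle sit in a low-responsibility super-pixel. For this reason I would not claim an unconditional theorem; instead I would (i) prove the statement exactly for classifiers whose restriction to the original-versus-masked hypercube is monotone in the masking order, and (ii) validate the general case empirically, exactly as the paper does. The empirical protocol would, on each benchmark image, compute the pixel-level responsibilities of \Cref{simple-resp} and the super-pixel-level responsibilities defined above for a fixed partition, verify that the pixels in the top responsibility band are contained in the top super-pixel band, and report the rank correlation across the dataset. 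This combination --- an exact result under an explicit monotonicity hypothesis together with empirical confirmation that the hypothesis is effectively met on natural images --- is what justifies restricting the iterative refinement to the highest-responsibility super-pixels.
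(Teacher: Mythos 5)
First, on alignment: the paper never proves \Cref{obs:highest} at all --- it is stated as a heuristic whose justification is purely informal (high-responsibility pixels do not, in practice, sit inside otherwise irrelevant super-pixels, ``owing to the continuous nature of images''), and whose validation is indirect, through the quality of the explanations the algorithm ultimately produces. So your decision not to claim an unconditional theorem, and to fall back on empirical validation, is faithful to the paper's actual treatment of this statement.

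The genuine gap is in the conditional theorem you do claim. Your covering argument needs two upward-closure properties of the classifier on the masking hypercube: (a) supersets of non-flipping masks remain non-flipping, so that masking all of $S_{b_1}\cup\dots\cup S_{b_t}$ (which masks many pixels \emph{outside} $P_j$) still preserves the classification, and (b) supersets of flipping masks remain flipping. Property (a) is degenerate: the empty mask is trivially non-flipping and every mask is a superset of it, so (a) alone forces every masking of the image to preserve the classification. Hence the only classifiers satisfying your joint hypothesis are those whose output never changes under any masking; for these, condition SC3 of \Cref{simple-cause} is never satisfiable, every responsibility is $0$, and your conditional theorem is vacuous. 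If instead you assume only the natural one-directional monotonicity (supersets of flipping masks remain flipping), then step (a) of the covering argument fails and the bound $k' \le t$ does not follow, so a high-responsibility pixel can still sit in a low-responsibility super-pixel. There is also a smaller slip: since $P_j$ may intersect $S_a$, the union of the $S_{b_j}$ is not in general a superset of $P_j$, only of $P_j \setminus S_a$. Finally, the empirical protocol you propose --- computing exact pixel-level responsibilities per \Cref{simple-resp} and rank-correlating them with super-pixel responsibilities --- is not feasible: the paper notes this computation is FP$^{\textrm{NP}[\log n]}$-complete and its brute-force realization is exponential in the number of pixels, which is precisely why the paper validates the observation only indirectly rather than by any such direct test.
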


Intuitively, the observation holds when pixels with high responsibility do
not appear in the superpixels surrounded by other pixels with very
low responsibility for the input image classification outcome.
Although this can happen in principle, especially in the case of adversarial images, we do not encounter this in practice. While the model has access to data only, this data has strong \emph{correlations} due to the underlying data production mechanism.

\subsection{Computing the Degree of Responsibility of a Superpixel}
\label{sec:responsibility}

Given a set of pixels $\mathcal{P}$, we use $\mathds{P}_i$ to denote a \emph{partition} of $\mathcal{P}$, that is, a 
set $\{P_{i,j}: \bigcup P_{i,j} = \mathcal{P}$ and $\forall j\not=k, P_{i,j} \cap P_{i,k}=\emptyset \}$.
The number of elements in $\mathds{P}_i$ is a parameter, denoted by $s$; in our implementation, \rex, we set $s=4$ as this provides convenient partitioning.
We refer to $P_{i,j}$ as \emph{superpixels}. It is insufficient to consider each superpixel in isolation: there is no reason why any one superpixel should be the cause of a classification. Therefore we create combinations of superpixels from the power set of~$\mathds{P}_i$, denoted $2^{\mathds{P}_i}$.

For an NN $\mathcal{N}$, an input $x$, and a partition $\mathds{P}_i$, we can generalize \Cref{def:simple-resp}
to the set of \emph{superpixels} defined by $\mathds{P}_i$.
We denote by $r_i(P_{i,j},x,\mathcal{N}(x))$ the \emph{sufficient responsibility} of
a superpixel $P_{i,j}$ for $\mathcal{N}$'s classification of $x$, given $\mathds{P}_i$.

For a partition $\mathds{P}_i$, we denote by $X_i$ the set of \emph{mutant images} obtained from $x$ by masking subsets of $2^{\mathds{P}_i}$, and by $\tilde{X}_i$ the subset of $X_i$ that is classified as the original image~$x$. Formally,
\[ \tilde{X}_i = \{ x_m : \mathcal{N}(x_m) = \mathcal{N}(x) \}. \] 

We compute $r_i(P_{i,j},x,\mathcal{N}(x))$, which is an approximation of the degree of sufficient responsibility (\Cref{def:simple-resp})
of each superpixel $P_{i,j}$ for the classification of $x$, in~\Cref{algo:responsibility}.
For a superpixel $P_{i,j}$, we define the set 
\[ \tilde{X}_i^j = \{ x_m : P_{i,j} \mbox{ is not masked in } x_m \} \cap \tilde{X}_i. \]
For a mutant image $x_m$, we define $\mathit{diff_i}(x_m,x)$ as the number of superpixels in the partition $\mathds{P}_i$ that
are masked in $x_m$ (that is, the difference between $x$ and $x_m$ with respect to $\mathds{P}_i$).
For an image $y$, we denote by $y(P_{i,j})$ an image that is obtained by
masking the superpixel $P_{i,j}$ in $y$. 
The responsibility of a superpixel $P_{i,j}$ is calculated by~\Cref{algo:responsibility}
as a minimum difference between a mutant image and the original image over all mutant images~$x_m$ that do not mask $P_{i,j}$, are classified the same as the original image~$x$, and masking $P_{i,j}$ in $x_m$ changes the classification. 


\begin{algorithm}[t]
  \caption{$\mathit{responsibility}(x, \mathds{P}_{i})$}
  \label{algo:responsibility}
  \begin{flushleft}
    \textbf{INPUT:} an image $x$, a partition $\mathds{P}_i$ \\
    \textbf{OUTPUT:} a responsibility map $R: \mathds{P}_i \rightarrow \mathds{Q}$
  \end{flushleft}
  \begin{algorithmic}[1]
    \FOR{each $P_{i,j}\in \mathds{P}_i$}
        \STATE $k \leftarrow \min\limits_{x_m} \{ \mathit{diff}(x_m, x) \,| \,\, x_m \in \tilde{X}^j_i \}$
        \STATE $r_{i,j} \leftarrow \frac{1}{k+1}$
    \ENDFOR    
    \RETURN  $r_{i,0},\dots,r_{i,|P_i|-1}$
  \end{algorithmic}
\end{algorithm}

\subsection{Iterative Refinement of Responsibility}
\label{sec:refine_responsibility}

\Cref{algo:responsibility} calculates the responsibility of each superpixel, subject to a given partition.
It then proceeds with only the high-responsibility superpixels. 
It returns a responsibility map, $R$, which maps superpixels to the rational numbers $\mathds{Q}$, indicating their degree of responsibility.

Note that in general, it is possible that all superpixels in a given partition have the same responsibility.
Consider, for example, a situation where the explanation is right in the middle of the image, and our partition divides the image into four quadrants. Each quadrant would be equally important for the classification, hence we would not gain any insight into why the image was classified in that particular way. In this case, the algorithm starts again from another partition. It can also be the case that there might be multiple disjoint combinations of high responsibility superpixels in~$\mathds{P}_i$. Refining them all can be computationally expensive. \rex allows the user to choose from various pruning strategies to reduce the amount of work performed.

Different combinations of superpixels may have equal explanatory power.
In the event that one superpixel is all that is required, therefore having responsibility 1, the further iterative subdivision of that superpixel is easy. However, if responsibility is split over multiple superpixels, we need a more refined approach. Take, for example, a situation where the left hand side of the image contains the explanation, as can be seen in~\Cref{fig:bison-map}. Here responsibility is split over two super pixels. Let us say that the passing combination of superpixels $P_c = \{0,2\}$, assuming that we have numbered $P_{i,j}$ consecutively, and that $s=4$. Each superpixel $P_{i,j}$ has responsibility $0.5$. We cannot simply mask superpixel $2$ while refining superpixel $0$ as both together are required for the classification. We handle this problem by holding one superpixel to its original value while refining the other and then reversing the procedure. The superpixel held to its original value, but not being refined, does not get any additional responsibility. 

\begin{algorithm}[!htp]
  \caption{$\mathit{iterative\_responsibility\_refinement}(x, \mathds{P}_i)$}
  \label{algo:compositonal_responsibility}
  \begin{flushleft}
    \textbf{INPUT:}\,\, an image $x$ and a partition $\mathds{P}_i$\\
    \textbf{OUTPUT:}\,\, a responsibility map $R: \mathds{P}_i \longrightarrow \mathds{Q}$
  \end{flushleft}
  \begin{algorithmic}[1]
    \STATE $R \leftarrow \mathit{responsibility}(x, \mathds{P}_i)$ 
    \IF {$R$ meets termination condition}
        \RETURN $R$
    \ENDIF
    \STATE $R' \leftarrow \emptyset$
    \FOR{each $P_{c} \in (2^{\mathds{P}_i} - \emptyset)$~s.t~$R(P_c)\not= 0$}
        \STATE $R' \leftarrow R' \, \cup\, \mathit{iterative\_responsibility\_refinement}(x, P_{c})$
    \ENDFOR
    \RETURN $R'$
  \end{algorithmic}
\end{algorithm}

One partition is rarely sufficient for a high-quality (\ie small) explanation, therefore we compute~\Cref{algo:responsibility} many times and compose the results, as shown in~\Cref{algo:compositonal_responsibility}. This calculates the responsibility for each
superpixel (Line~1). If the termination condition is met (Lines~\mbox{2--3}), the responsibility map $Q$ is updated accordingly. Otherwise, for each superpixel in $\mathds{P}_i$, 
we refine it and call the algorithm recursively.
We use $\cup$ to include these newly computed values in the returned map.
The algorithm terminates when: 
1)~the superpixels in $\mathds{P}_i$ are sufficiently refined (containing only very few pixels), or
2)~when all superpixels in $\mathds{P}_i$ have the same responsibility (this condition is for efficiency). In particular, if no further subdivision of a superpixel results in the desired classification, responsibility for all superpixels is $0$ and the algorithm terminates.

\subsection{Explanation Extraction}\label{sec:compositional_explanation}

So far, we assume one particular partition $\mathds{P}_i$, which~\Cref{algo:compositonal_responsibility} recursively refines and calculates the corresponding
responsibilities of superpixels in each step by calling~\Cref{algo:responsibility}.
We note that the choice of the initial partition over the image can affect the values calculated
by~\Cref{algo:compositonal_responsibility}, as this partition determines the set of possible mutants
in~\Cref{algo:responsibility}. We ameliorate the influence of the choice of any particular partition by iterating the algorithm over a set of initial partitions.
Twenty iterations of the algorithm will therefore yield $20$ starting partitions, chosen at random. \rex allows the user to choose from a number of different types of random to build these partitions, with uniform being the default. 

In~\Cref{algo:compositonal_explanation}, we consider $N$ initial partitions and compute an average
of the degrees of responsibility induced by each of these partitions.
In the algorithm, $\mathds{P}^x$ stands for a specific partition chosen randomly from the set of partitions, and $r_p$ denotes the degree of responsibility of a pixel $p$ w.r.t.~$\mathds{P}^x$.

\begin{algorithm}[t]
  \caption{$\mathit{explanation}(x)$}
  \label{algo:compositonal_explanation}
  \begin{flushleft}
    \textbf{INPUT:}\,\, an input image $x$, a parameter $N \in \mathds{N}$\\
    \textbf{OUTPUT:}\,\, an explanation $\mathcal{E}$ 
  \end{flushleft}
  \begin{algorithmic}[1]
    \STATE $r_p \leftarrow 0$ for all pixels $p$
    \FOR{$c$ in $1$ to $N$}
        \STATE $\mathds{P}^x\leftarrow$ sample a partition
        \STATE $R \leftarrow \mathit{iterative\_responsibility\_refinement}(x, \mathds{P}^x)$
        \FOR{each $P_{i,j} \in~\mbox{domain of}~R$}
            \STATE $\forall p\in P_{i,j}: r_p \leftarrow r_p + \frac{R(P_{i,j})}{|P_{i,j}|}$
        \ENDFOR
    \ENDFOR
    \STATE $\mathit{pixel\_ranking} \leftarrow$ pixels from high $r_p$ to low
    \STATE $\mathcal{E}\leftarrow\emptyset$
    \FOR{each pixel $p_i \in \mathit{pixel\_ranking}$}
        \STATE $\mathcal{E}\leftarrow\mathcal{E}\cup\{p_i\}$
        \STATE $x^\mathit{exp}\leftarrow$ mask pixels of $x$ that are \textbf{not} in $\mathcal{E}$
        \IF{$\mathcal{N}(x^\mathit{exp})=\mathcal{N}(x)$} 
          \RETURN{$\mathcal{E}$}
        \ENDIF
    \ENDFOR
  \end{algorithmic}
\end{algorithm}

\Cref{algo:compositonal_explanation} has two parts:
ranking all pixels (Lines 1--9) and constructing the explanation (Lines 10--17). 
The algorithm ranks the pixels of the image according to their responsibility for 
the model's output. 
Each time a partition is randomly selected (Line~3), the
iterative responsibility refinement (\Cref{algo:compositonal_responsibility}) is called to refine it into
a set of fine-grained superpixels and calculate their responsibilities (Line~4). A superpixel's responsibility is evenly distributed to all its pixels, and the pixel-level responsibility is updated accordingly for each sampled partition (Lines~5--7). After $N$ iterations, all pixels are ranked according to their responsibility $r_p$.

The remainder of~\Cref{algo:compositonal_explanation} follows the method for explaining the result of an image classifier by~\citet{sun2020explaining}. That is, we construct a subset of pixels
$\mathcal{E}$ to explain $\mathcal{N}$'s output on this particular input~$x$ \emph{greedily}.
We add pixels to $\mathcal{E}$ as long as $\mathcal{N}$'s output on $\mathcal{E}$ does not
match $\mathcal{N}(x)$. This process terminates when $\mathcal{N}$'s output is the same
as on the whole image $x$. The set $\mathcal{E}$ is returned as an explanation. We note that this extraction procedure is not normally followed by other familiar \xai tools, as their several different definitions of explanation do not extend to finding minimal sets of pixels which induce the desired classification.


While we approximate the computation of an explanation in order to ensure efficiency of our approach, the algorithm is built on solid theoretical foundations, which distinguishes it from other random or heuristic-based approaches. In practice, while our algorithm uses an iterative average of a greedy approximation, it yields highly accurate results (\Cref{sec:evaluation}). Furthermore, our approach is simple and general, and uses the neural network as a black-box. 

\subsection{Illustrative Example}
\label{sec:exp0}

\begin{figure}[t]
    \centering
    \begin{tikzpicture}
        \node[] (bus) {\includegraphics[width=3cm]{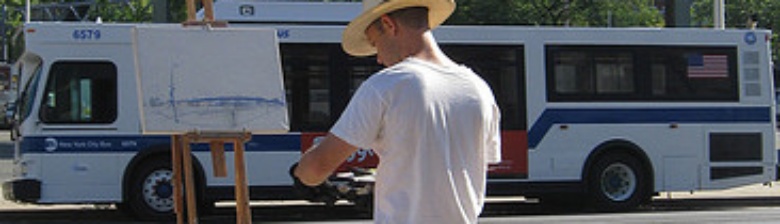}};
      
        \node[above right= -0.5cm and 0.6cm of bus, draw=red, inner sep=1pt] (bus_02) {\includegraphics[width=1.5cm]{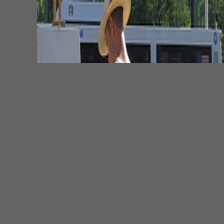}};

        \node[above= 0.25cm of bus_02, draw=red, inner sep=1pt] (bus_01) {\includegraphics[width=1.5cm]{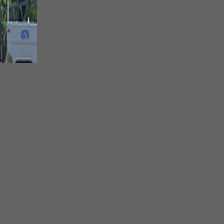}};
        \node[above= 1.5cm of bus_01,  label={[align=center]below:\ding{172} initial\\partitions}] (init){};
        \node[below= 0.125cm of bus_02, draw=green, inner sep=1pt] (bus_03) {\includegraphics[width=1.5cm]{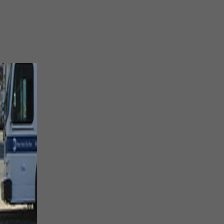}};
        \node[below= 0.25cm of bus_03, draw=green, inner sep=1pt] (bus_04) {\includegraphics[width=1.5cm]{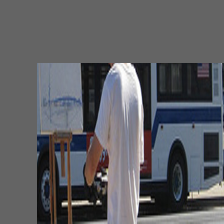}};

        \draw[->, thick] (bus) -- (bus_01);
        \draw[->, thick] (bus) -- (bus_02);
        \draw[->, thick] (bus) -- (bus_03);
        \draw[->, thick] (bus) -- (bus_04);

        \node[above right= -0.8cm and 0.8cm of bus_01, draw=red, inner sep=1pt] (m1) {\includegraphics[width=1.2cm]{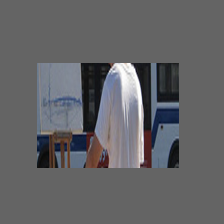}};
        \node[above= 1cm of m1, label={[align=center]below:\ding{173} refined\\partitions}] (ref){};
        \node[below= 0.02cm of m1, draw=green, inner sep=1pt] (m2) {\includegraphics[width=1.2cm]{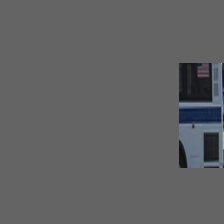}};
        \node[below= 0.02cm of m2, draw=red, inner sep=1pt] (m3) {\includegraphics[width=1.2cm]{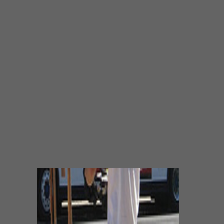}};
        \node[below= 0.02cm of m3, draw=green, inner sep=1pt] (m4) {\includegraphics[width=1.2cm]{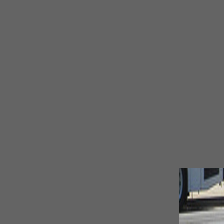}};
        \node[below= 0.02cm of m4, draw=red, inner sep=1pt] (m5) {\includegraphics[width=1.2cm]{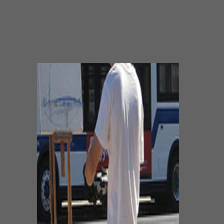}};
        \node[below= 0.02cm of m5, draw=green, inner sep=1pt] (m6) {\includegraphics[width=1.2cm]{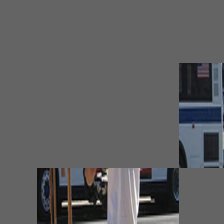}};

        \begin{scope}[on background layer]
            \draw[rounded corners, fill=blue!10] (2, 5) rectangle ++(4.2, -9);
        \end{scope}

        \node[right= 4.7cm of bus, label={[align=center]above:\ding{174} responsibility\\map}] (map) {\includegraphics[scale=0.14]{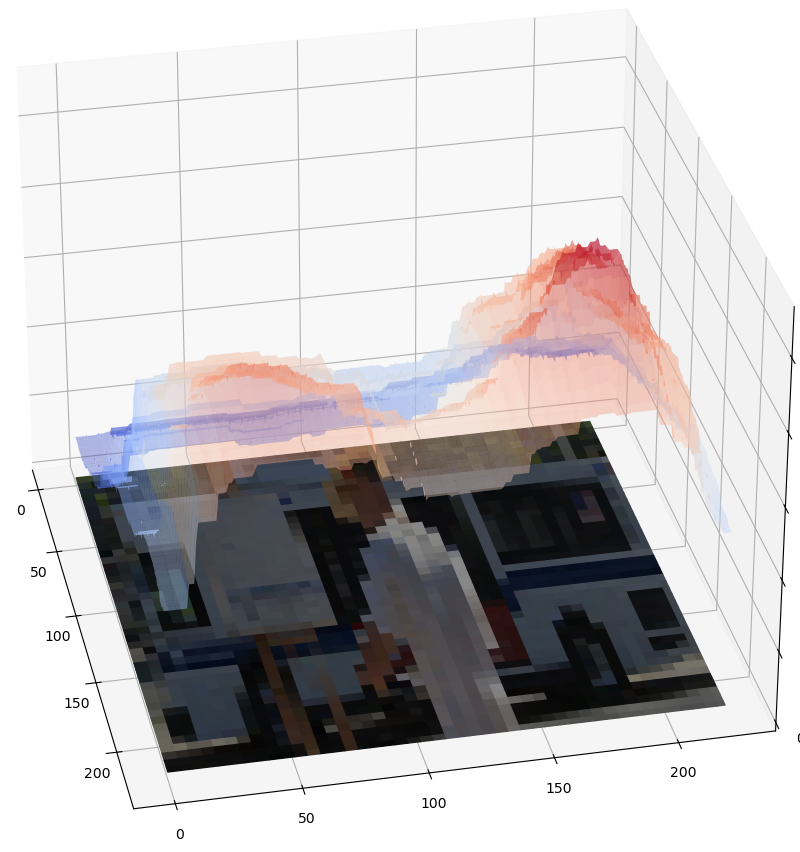}};
         \node[right= 0.5cm of map, label={[align=center]above:\ding{175} extracted\\explanation}] (exp) {\includegraphics[scale=0.25]{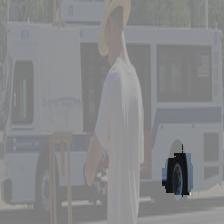}};

        \draw[->, thick] (m1) -- (map);
        \draw[->, thick] (m2) -- (map);
        \draw[->, thick] (m3) -- (map);
        \draw[->, thick] (m4) -- (map);
        \draw[->, thick] (m5) -- (map);
        \draw[->, thick] (m6) -- (map);

        \draw[->, thick] (map) -- (exp);
         
    \end{tikzpicture}
    \caption{The \rex algorithm in action: \rex creates an initial random partition of an image into $4$ sections (\ding{172}). All combinations of these sections are queried by the model, with further refinement applied to those sections or combinations of sections that meet the requirements (\ding{173}). Some combinations, highlighted in green, are classified as bus, others, in red, are not.  This results, after several iterations, in a detailed responsibility map (\ding{174}), from which a minimal passing explanations can be extracted (\ding{175}).}
    \label{fig:rex_example}
\end{figure}

To illustrate how \cet works, consider Figure~\ref{fig:rex_example}, which is classified as \lab{bus} by a ResNet50, even though there is an occlusion in the middle. Initially, \cet picks an arbitrary partition of the image. This results in $15$ combinations of masking superpixels, though in practice, we only need to examine $14$ combinations as the $15^{th}$ is the entire image, which we have already tested. We use~\Cref{algo:responsibility} to calculate the responsibility of each superpixel. Those superpixels or combinations of superpixels which contribute towards the correct classification are further refined. This iterative refinement is a recursive application of the same initial random partitioning into four superpixels, as in~\Cref{algo:compositonal_responsibility}.

The responsibility maps shown in~\Cref{fig:demo-heatmaps} demonstrate the importance of~\Cref{algo:compositonal_explanation}. The initial partitioning of the image can greatly affect the quality of the responsibility map.~\Cref{fig:busn1} shows the responsibility map after one iteration of the algorithm. While it still indicates well the areas of interest in the image it is also rather crude.~\Cref{fig:busn10} contains more refined information about the distribution of responsibility over the pixels in the image. We now have two distinct peaks of responsibility on either side of the central occlusion (a person). Further iterations bring about further refinement. By iteration $100$ (\Cref{fig:busn100}), it is clear that the `dominant' explanation is around the rear tire of the bus. The other explanation however, while depressed in relation to the tire, has not vanished and can still be found through searching the responsibility map~\citep{CKK25}. Moreover, a disjoint explanation exists for this image, as given in~\Cref{fig:bus_occ}.

\begin{figure}[t]
    \centering
    \begin{subfigure}{0.18\linewidth}
        \centering
        \includegraphics[scale=0.25]{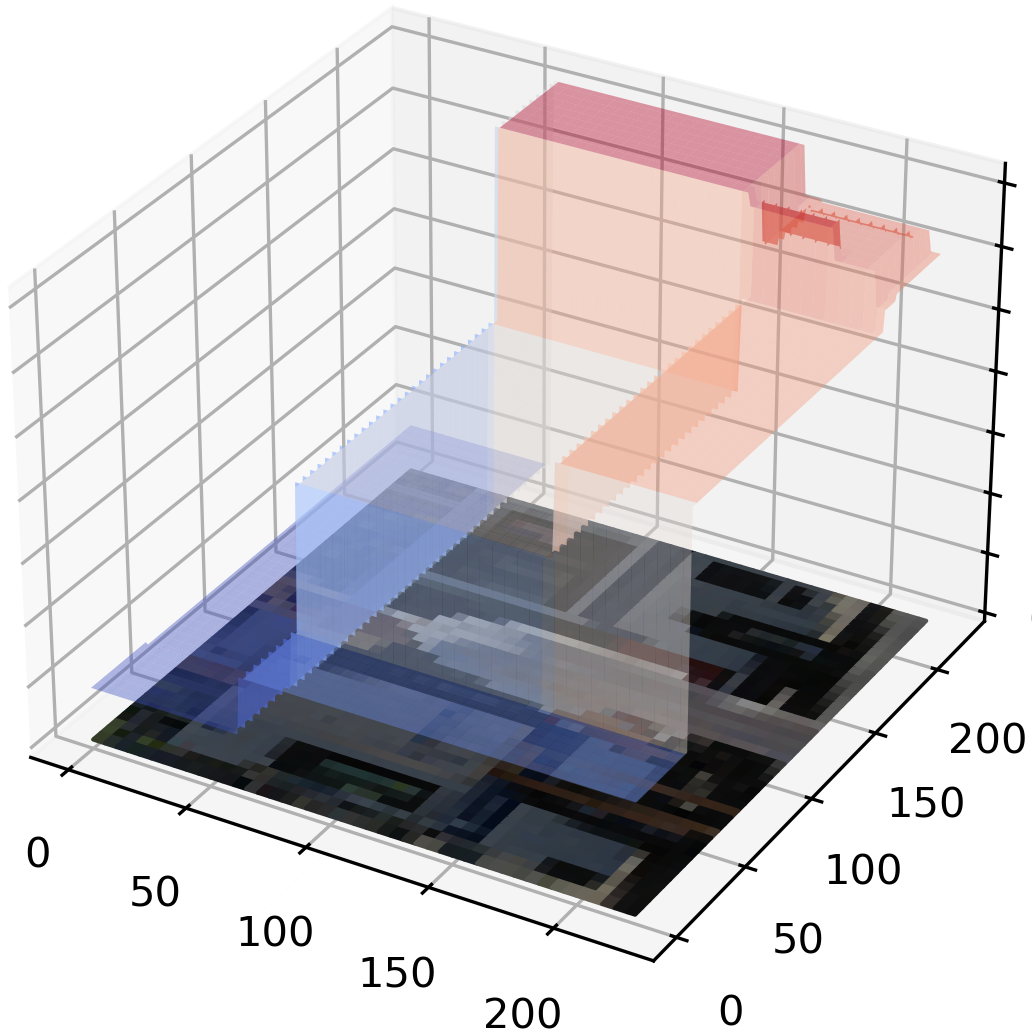}
        \caption{$N = 1$}
        \label{fig:busn1}
    \end{subfigure}
     \begin{subfigure}{0.18\linewidth}
        \centering
        \includegraphics[scale=0.25]{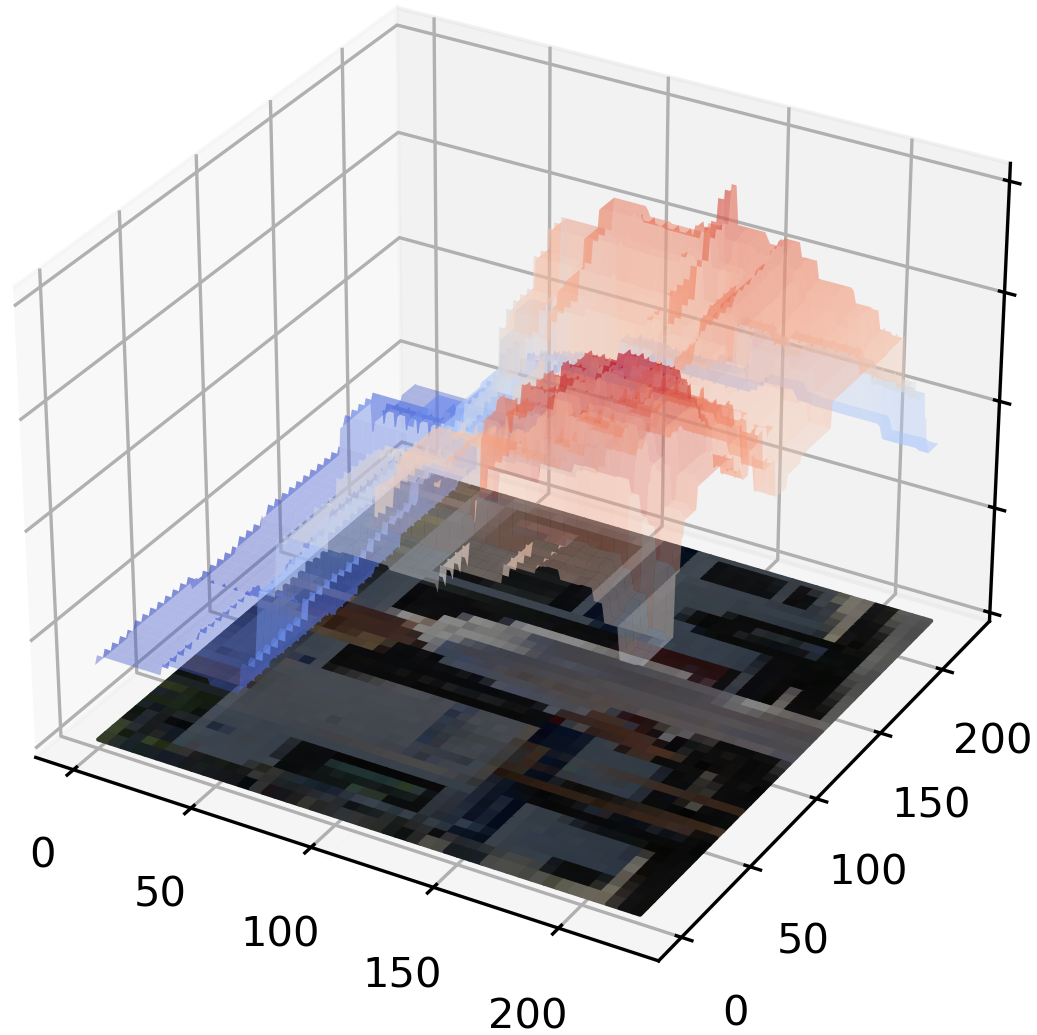}
        \caption{$N = 10$}
        \label{fig:busn10}
    \end{subfigure}
     \begin{subfigure}{0.18\linewidth}
        \centering
        \includegraphics[scale=0.25]{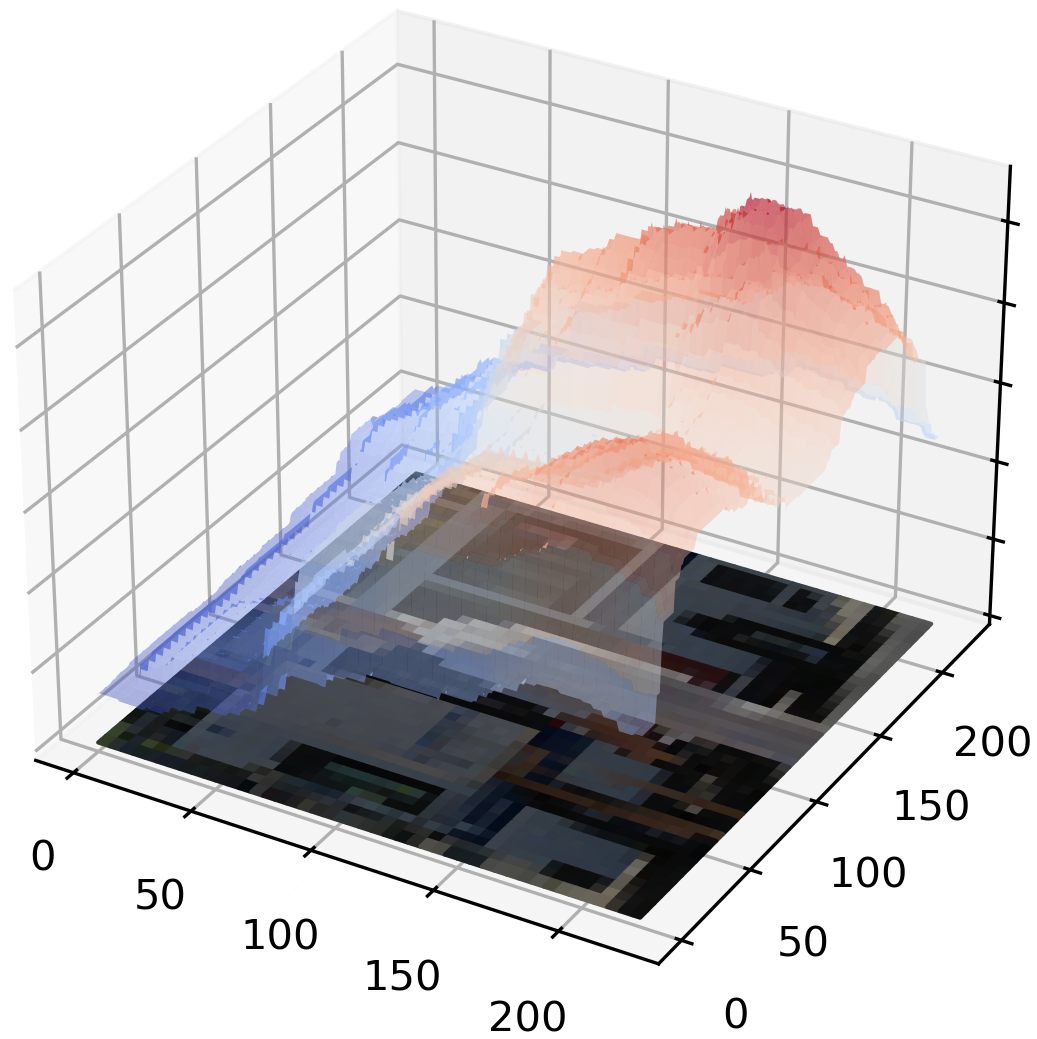}
        \caption{$N = 30$}
        \label{fig:busn30}
    \end{subfigure}
     \begin{subfigure}{0.18\linewidth}
        \centering
        \includegraphics[scale=0.25]{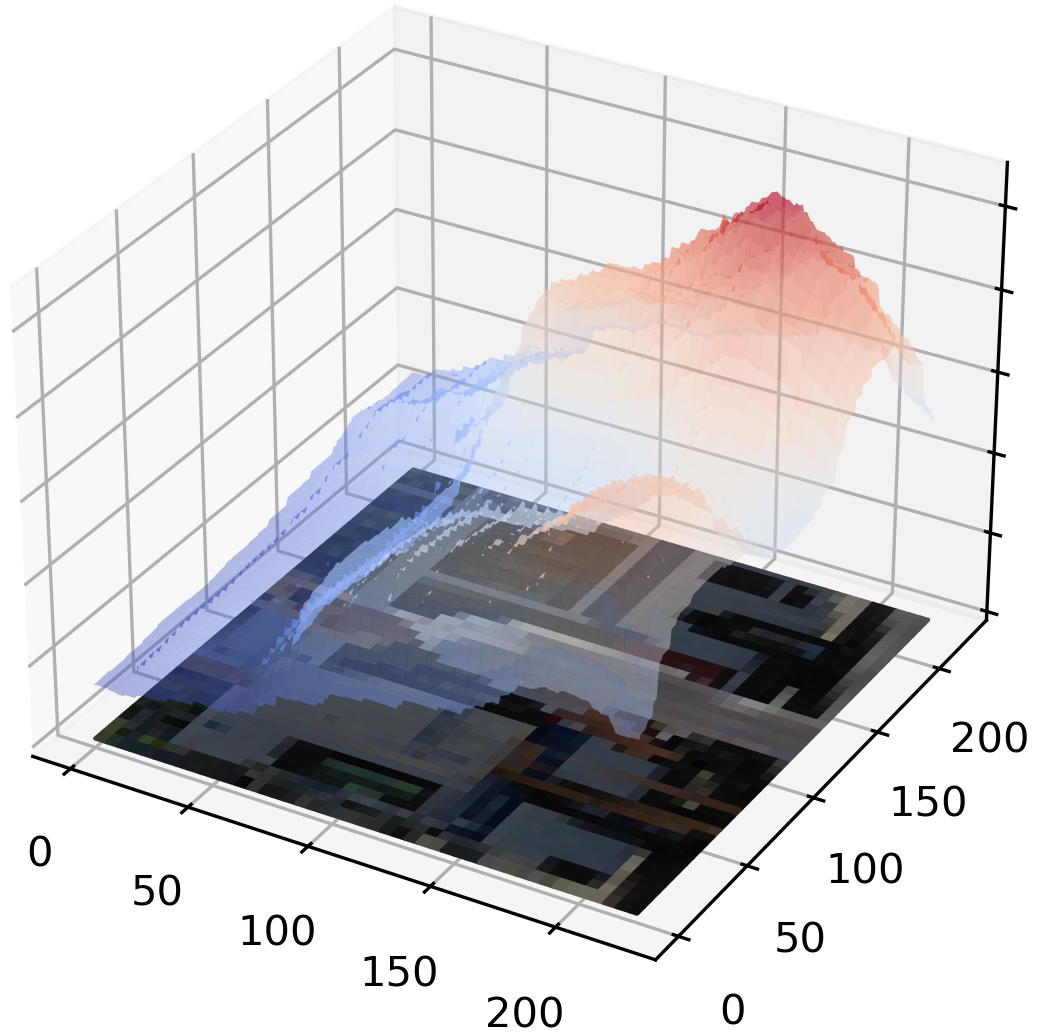}
        \caption{$N = 60$}
        \label{fig:busn60}
    \end{subfigure}
     \begin{subfigure}{0.18\linewidth}
        \centering
        \includegraphics[scale=0.25]{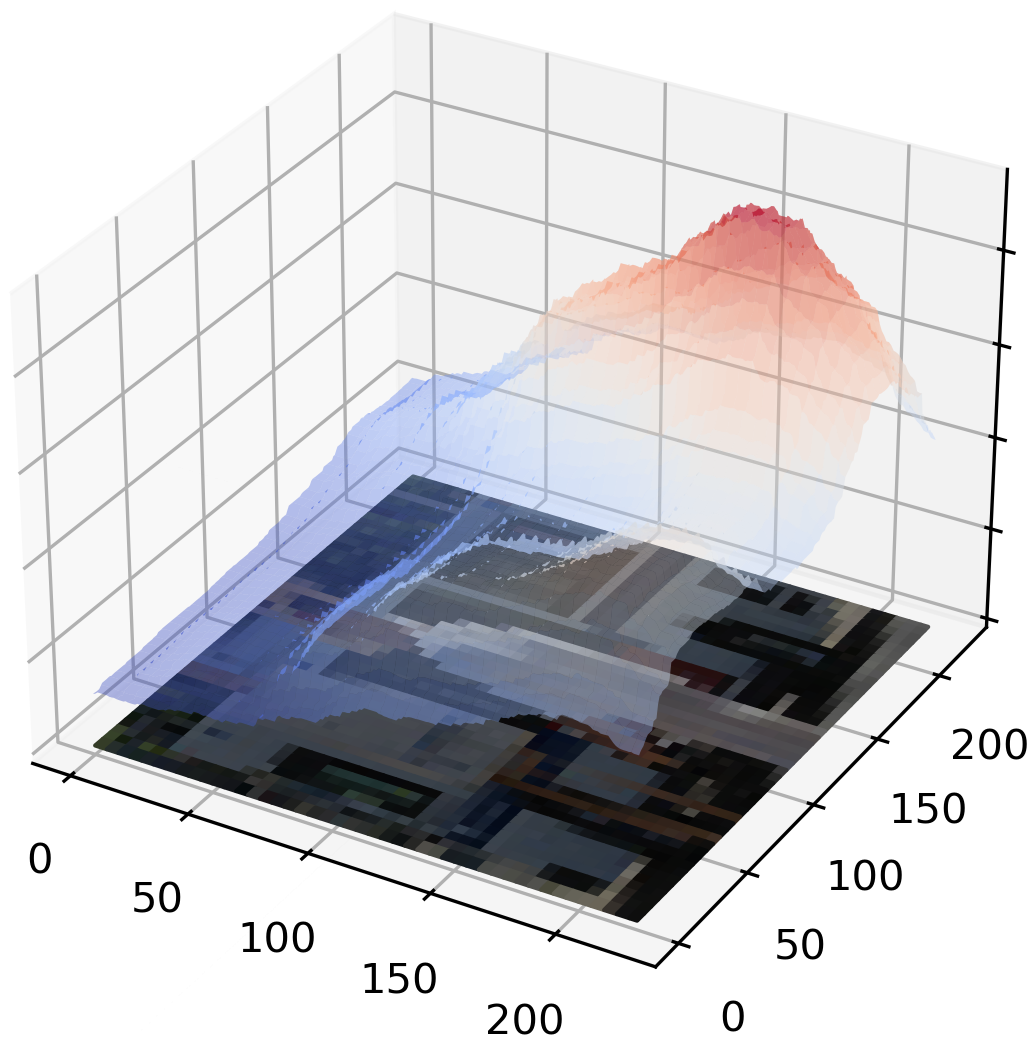}
        \caption{$N = 100$}
        \label{fig:busn100}
    \end{subfigure}
    \caption{Improvement of \cet's pixel ranking on `bus' (\Cref{fig:rex_example}) as the number of iterations $N$ increases (\Cref{algo:compositonal_explanation})}
    \label{fig:demo-heatmaps}
\end{figure}

\begin{figure}
    \centering
    \begin{subfigure}{0.48\linewidth}
        \includegraphics[scale=0.25]{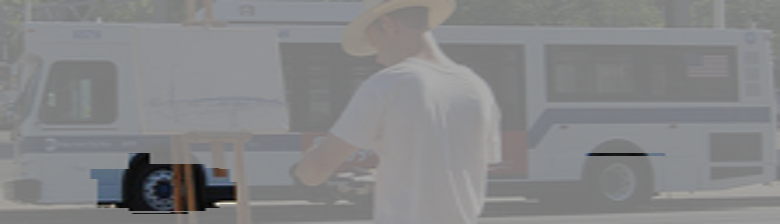}
        \caption{Non-contiguous explanation}
        \label{fig:bus_occ}
    \end{subfigure}
    \hfill
    \begin{subfigure}{0.48\linewidth}
        \includegraphics[scale=0.25]{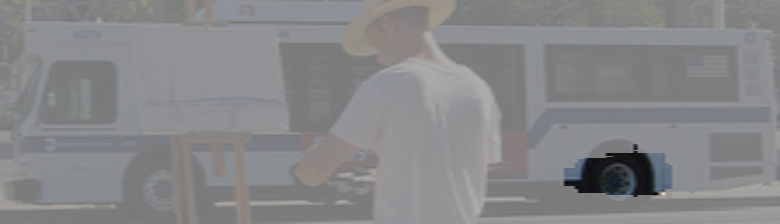}
        \caption{Contiguous explanation}
        \label{fig:bus_min}
    \end{subfigure}
    \caption{Two different explanations extracted from the $30$ iteration ranking in~\Cref{fig:demo-heatmaps}.~\Cref{fig:bus_occ} contrasts interestingly with~\Cref{fig:bus_min}: it seems that the front tire
    by itself is not sufficient (a very small rectangular section of the rear tire is included in the explanation), but the rear tire alone is sufficienct.} 
    \label{fig:demo-exp}
\end{figure}

\subsection{Termination and Complexity}

It is clear that the algorithm necessarily terminates as soon as the parts can no longer be divided into superpixels. Moreover, the number of calls it performs to the
model is linear in the size of the image, as proved in the following lemma.
\begin{lemma}\label{ce-complexity}
The number of calls of ~\Cref{algo:compositonal_explanation} to the model is $O(2^s n N)$, where $s$ is the size of the partition in each step
(in our setting $s=4$), $n$~is the number of pixels in the original image $x$, and $N$ is the number of initial partitions.
\end{lemma}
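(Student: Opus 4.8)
The plan is to split the model calls made by \Cref{algo:compositonal_explanation} into those issued during the ranking phase (Lines~1--9) and those issued during the greedy extraction phase (Lines~10--17), bound each separately, and observe that the ranking phase dominates. The extraction phase is immediate: its loop runs over the at most $n$ pixels of $x$ in ranked order, and each iteration performs a single classification $\mathcal{N}(x^{\mathit{exp}})$, so it contributes at most $n$ model calls. Since the responsibility-distribution step (Lines~5--7) issues no calls at all, every remaining call is made inside the $N$ invocations of \Cref{algo:compositonal_responsibility} on Line~4, so it remains to bound the cost of one such invocation and multiply by $N$.

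First I would bound the cost of a single responsibility computation. Given a partition $\mathds{P}_i$ of size $s$, \Cref{algo:responsibility} needs the classifications of all mutants obtained by masking subsets of $\mathds{P}_i$; there are exactly $2^s$ such subsets, so computing the sets $\tilde{X}_i$ and $\tilde{X}_i^j$, and hence the whole responsibility map, costs $O(2^s)$ calls, independently of the image size. The task then reduces to counting how many times \Cref{algo:responsibility} is invoked within one call to \Cref{algo:compositonal_responsibility}: each node of the recursion calls it exactly once, so the number of model calls per initial partition is $O(2^s)$ times the number of nodes in the refinement tree.

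The key step is to show that this refinement tree has only $O(n)$ nodes. I would argue that the recursion induces a hierarchical family of superpixels over the pixels of $x$: each node refines a region into $s$ disjoint superpixels, and by the termination condition (Lines~2--3 of \Cref{algo:compositonal_responsibility}) the recursion stops once the superpixels contain only a constant number of pixels. Since each refinement strictly shrinks the refined region while its $s$ children partition it, the leaves of the tree are disjoint superpixels of size $\Omega(1)$ covering at most $n$ pixels, giving $O(n)$ leaves; a charging argument that attributes each internal node to the pixels it refines then yields $O(n)$ nodes overall. Multiplying by the $O(2^s)$ cost per node gives $O(2^s n)$ calls per initial partition, hence $O(2^s n N)$ over the $N$ partitions, which dominates the $O(n)$ extraction calls. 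Termination is then immediate, since the strictly decreasing region size guarantees the minimal-size condition is met after finitely many refinements.

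The main obstacle I anticipate is exactly this $O(n)$ bound on the refinement tree, because Line~6 of \Cref{algo:compositonal_responsibility} recurses over \emph{combinations} $P_c \in 2^{\mathds{P}_i}$, and distinct passing combinations can share superpixels, so the regions explored at a given depth need not be disjoint and a naive count permits a $2^{s-1}$ blow-up per level. Making the argument rigorous requires combining the termination condition, which caps the refinement depth since region size decreases strictly, with the pruning of overlapping high-responsibility combinations (the ``hold one superpixel fixed while refining the other'' procedure), so that the total pixel mass processed across all nodes forms a geometric series summing to $O(n)$ rather than growing with depth. I would therefore state the tree-size bound under the partition/pruning discipline used by \textsc{ReX} and flag that, absent pruning, the worst case is only polynomial rather than linear.
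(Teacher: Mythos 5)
Your proof follows essentially the same route as the paper's: $O(2^s)$ model calls per invocation of \Cref{algo:responsibility} (one mutant per subset of the $s$ superpixels), an $O(n)$ bound on the total number of refinement nodes derived from the termination condition on superpixel size, and a multiplicative factor of $N$ for the independent initial partitions. If anything, your write-up is tighter than the paper's own proof, which neither accounts separately for the greedy extraction phase (your extra $O(n)$ calls from Lines 10--17 of \Cref{algo:compositonal_explanation}) nor justifies the $O(n)$ node count: it simply asserts that refinement proceeds down toward single-pixel superpixels, ``resulting in $n$ pixels in the last step,'' implicitly treating the recursion as a hierarchical partition with disjoint children. The obstacle you flag --- that Line~6 of \Cref{algo:compositonal_responsibility} recurses over possibly overlapping combinations in $2^{\mathds{P}_i}$, so the linear bound genuinely requires the pruning/work-queue discipline that \rex implements (mentioned in the paper only in the surrounding text, never inside the proof) --- is a real gap in the paper's argument rather than in yours, and your assessment that the unpruned worst case is polynomial rather than linear is correct.
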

\begin{proof}
The computation of responsibilities of superpixels in one partition is $O(2^s)$,
as the algorithm examines the effect of mutating each
subset of the superpixels in the current partition. Note that $s$ is a constant independent of the size of the image.
The number of steps is determined by the termination condition on the size of a single superpixel, which in the worst case is the same as a single pixel. 
In our setting, the algorithm terminates when a single superpixel is contains fewer than $10$ pixels\footnote{\rex also terminates when a user-defined work budget is exceeded.}. However, in general, the algorithm can continue down to the level of a single pixel, thus resulting in $n$ pixels in the last step.
The algorithm performs $N$ iterations, and every iteration uses a different initial partition. The parameter $N$ is independent of the size of the image.
\end{proof}

Recall the research question \textbf{RQ3}:
\begin{description}
    \item[RQ3] Is there an efficiently computable approximation to explanations? 
\end{description}

The complexity analysis of \Cref{algo:compositonal_explanation} in \Cref{ce-complexity} provides an affirmative answer
to this research question.

\section{Evaluation}
\label{sec:evaluation}

We conduct a large scale investigation into \rex, comparing it with a host of popular \xai tools. Our answers to the research questions will largely remain empirical.
Just as there is no universally agreed definition of explanation, there is also no single best way to evaluate the quality of an explanation. 

\begin{description}
    \item[RQ4] What are suitable quality measures for explanations?
\end{description}

A causal explanation is a minimal set of pixels from an image which are sufficient to obtain the same top-$1$ classification as the complete set of all pixels of the image. This immediately suggests size of explanation as a robust quality measure. As none of the tools (apart from \rex) compute minimal, sufficient explanations, we use our mechanism of extraction (\Cref{sec:compositional_explanation}) on the output of all tools tested. 

We also use a number of complementary measures as proxies for the quality of the explanations. Insertion and deletion curves, introduced in~\citet{petsiuk2018rise}, are widely used to assess the quality of saliency maps. 
For insertion curves, the model confidence on a target class is measured as pixels are inserted over a baseline value. 
The order of insertion is derived from the map. If a map has accurately identified the most important pixels for a class, then the class confidence should rise quickly. 
This results in a large AUC (area under curve). Deletion curves are calculated in the opposite way, replacing the pixels with a baseline value.
Again, if the map has identified the most important pixels accurately, then the model confidence should drop quickly, resulting in a low AUC.

While a \rex explanation is not a map, we calculate insertion and deletion over the responsibility map. The AUC for both insertion and deletion are obviously tied to the original confidence of the model. To allow for a fair comparison between images with different initial confidences, we follow~\citet{calderon2024real} and normalize all curves by the initial confidence of the model on the image under test. Note that this can result in the the insertion curve AUC being greater than~$1$. This phenomenon occurs when the confidence on the entire image is lower than the confidence of intermediate insertion stages. This makes intuitive sense: if the pixel ranking is accurate, then those pixels that either do not contribute towards the classification, or reduce confidence, are left to be added last. Without these `negative' pixels, confidence is potentially much higher. 

\paragraph*{Comparison with human segmentation} For those datasets that contain segmentation information, we also measure the intersection of the minimal explanation with that segmentation. This segmentation is human-provided and we argue corresponds most closely with what a human considers acceptable.
On VOC2012, for example, we would like a minimal explanation to reside mostly inside the human-provided segmentation mask and therefore contain only a small number of pixels outside the mask. For datasets that feature occluded objects, a good explanation would have as few pixels from inside the occlusion region as possible, as the occlusion should not make a substantial contribution towards the model's classification.

To answer \textbf{RQ4}, we argue that the size of explanations, as well as insertion curves and the intersection with the human segmentation are measures that match our intuition. However, we argue that deletion curves are not, as they remain relatively high in presence of multiple detected explanations (\Cref{subsec:discussion}). 

\subsection{Experimental Setup}\label{sec-exp-setup}

We answer research questions {\bf RQ5--RQ7} experimentally.
We have implemented the proposed explanation approach in the publicly available tool \rex. For all other \xai tools, apart from \rise, we used the Captum \pytorch library~\citep{kokhlikyan2020captum}\footnote{\url{https://www.captum.ai}}. For \rise, we used the authors' implementation\footnote{\url{https://github.com/eclique/RISE}}, which we lightly altered to use \pytorch rather than the original TensorFlow framework. 

In the evaluation, we compare \rex with a wide range of explanation tools, specifically \gradcam~\citep{CAM}, \ks~\citep{lundberg2017unified}, \gs~\citep{gradientshap}, \rise~\citep{petsiuk2018rise}, \lime~\citep{lime}, \ig~\citep{sundararajan2017axiomatic} and~\noise~\citep{smilkov2017smoothgradremovingnoiseadding}. For the ResNet50 model only, we also include \lrp~\citep{LRP}. Unfortunately, the Captum version of \lrp could not run on the other tested models.

While our main interest is in black-box explainability, we have included methods which require access to the model gradient, such as \gs and \ig, and internal model layers, \ie \gradcam. This is largely due to the relative paucity of purely black-box \xai methods for image data.

We use $4$ data sets: \imagenet-1k-mini validation~\citep{imagenet}, 
ECSSD~\citep{ecssd},
Pascal VOC2012~\citep{pascal-voc-2012},  and a ``Photobombing'' dataset we created. Imagenet1k-mini comes with labels for ground truth. VOC2012 has labels and segmentation data. ECSSD comprises complex images which come with a human-provided segmentation.
We created the ``Photobombing'' dataset by inserting black occlusions into \imagenet images, meaning we have the original label and also know the pixel coordinates of the occlusions. These occlusions are placed so as not to change the model classification. We use the \textsc{torchvision}\footnote{\url{https://pytorch.org/vision/stable/index.html}} implementations of ResNet50, ViT and ConvNext-Large with default weights on all data sets.

All experiments were conducted using a server running Ubuntu 20.04 with an Nvidia A40 GPU. All tools were used with default settings. For \rex in particular, this means that it performs $20$ iterations of the algorithm ($N$ in~\Cref{algo:compositonal_explanation}), with a minimum superpixel size of $10$ pixels. The pruning strategy for the work queue is ``area'' (the passing mutants are ordered by size, smallest to largest) and only one item is kept in the work queue at a time, so we only refine the smallest passing mutant. Partitions are created uniformly at random. Other strategies and partitioning distributions are available in the configuration. 

\rex uses four superpixels per partition. This is practical for images as we can split the image with just one call to a random number generator. The number of mutants produced is also relatively small ($15$), which can usually be fit into a single batch for model inference. Having more initial superpixels does not lead to greater expressivity: we iteratively refine passing combinations of superpixels, essentially recreating the more detailed initial partition which would be produced by a greater number of superpixels. The termination conditions for the partition refinement in~\Cref{algo:compositonal_responsibility} are: 1)~the area of a superpixel is less than $10$ pixels of (resized) input image, or 2)~the four superpixels share the same responsibility.

\subsection{Results}

\begin{description}

    \item[RQ5] What is the precision of explanations computed by our algorithm compared to other \xai methods?
    \item[RQ6] Is there a trade-off between precision and compute cost of the explanations?
    \item[RQ7] Can black-box methods achieve the same quality of explanations as white and grey-box methods?
\end{description}

\begin{table}[t]
    \caption{Results for ResNet50 model.}
    \label{tab:resnet50_eval}
    \centering
    \begin{subtable}{1\textwidth}
        \centering
        \caption{Results on Voc}
        \begin{tabular}{r|r|r|r|r|r|r}
        tool & area ($\downarrow$) & std & ins ($\uparrow$) & del & IN ($\uparrow$) & OUT ($\uparrow$) \\ 
        \midrule
        \midrule
        \gradcam & 0.0832 & 0.1202 & 1.0488 & 0.2756 & 0.3202 & -- \\ 
        \noise & 0.2867 & 0.2627 & 0.6953 & 0.0851 & 0.3241 & -- \\ 
        \lrp  & 0.3936 & 0.2828 & 0.5267 & 0.2953 & 0.2093 & -- \\  
        \ig  & 0.4904 & 0.2768 & 0.4899 & 0.1228 & 0.2215 & -- \\ 
        \gs  & 0.3894 & 0.3002 & 0.5935 & \textbf{0.0651} & 0.2872 & -- \\ 
        \ks  & 0.7741 & 0.2084 & 0.188 & 0.1941 & 0.1636 & -- \\ 
        \lime  & 0.1124 & 0.1616 & 0.963 & 0.2329 & 0.3039 & -- \\ 
        \rex  & \textbf{0.0427} & \textbf{0.0505} & \textbf{1.2218} & 0.2148 & \textbf{0.6432} & -- \\ 
        \rise & 0.1271 & 0.1547 & 0.9358 & 0.2264 & 0.2861 & -- \\ 
        \bottomrule
    \end{tabular}
      \end{subtable}
      \bigskip
      
      \begin{subtable}{1\textwidth}
        \centering
        \caption{Results on Imagenet}
        \begin{tabular}{r|r|r|r|r|r|r}
        tool & area ($\downarrow$) & std & ins ($\uparrow$) & del & IN ($\uparrow$) & OUT ($\uparrow$) \\
        \midrule
        \midrule
        \gradcam & 0.0486 & 0.0839 & 1.0059 & 0.4073 & -- & -- \\ 
        \noise & 0.1847 & 0.2131 & 0.7612 & 0.1274 & -- & -- \\ 
        \lrp & 0.2749 & 0.2418 & 0.5972 & 0.3567 & -- & -- \\ 
        \ig & 0.4063 & 0.264 & 0.534 & 0.1767 & -- & -- \\ 
        \gs & 0.2861 & 0.2696 & 0.6474 & \textbf{0.0983} & -- & -- \\  
        \ks & 0.6686 & 0.2472 & 0.2549 & 0.2649 & -- & -- \\ 
        \lime & 0.0649 & 0.0969 & 0.9871 & 0.3313 & -- & -- \\ 
        \rex & \textbf{0.0333} & \textbf{0.0351} & \textbf{1.1101} & 0.3403 & -- & -- \\ 
        \rise & 0.1026 & 0.1254 & 0.8898 & 0.3728 & -- & -- \\ 
                \bottomrule
      \end{tabular}  
      \end{subtable}
     \bigskip
     
    \begin{subtable}{1\textwidth}
        \centering
        \caption{Results on ``Photobombing''}
        \begin{tabular}{r|r|r|r|r|r|r}
        tool & area ($\downarrow$) & std & ins ($\uparrow$) & del & IN ($\uparrow$) & OUT ($\uparrow$) \\
        \midrule
        \midrule
        \gradcam & 0.0455 & 0.0642 & 0.9843 & 0.3625 & -- & 0.8505 \\ 
        \noise & 0.1565 & 0.1832 & 0.7604 & 0.1348 & -- & 0.8732 \\ 
        \lrp & 0.2429 & 0.205 & 0.6049 & 0.3687 & -- & 0.9058 \\ 
        \ig & 0.4184 & 0.2311 & 0.5178 & 0.1853 & -- & 0.8806 \\ 
        \gs & 0.296 & 0.2373 & 0.637 & \textbf{0.114} & -- & 0.8533 \\ 
        \ks & 0.6601 & 0.2246 & 0.2527 & 0.2612 & -- & 0.9188 \\ 
        \lime & 0.0507 & 0.0523 & 0.9937 & 0.3046 & -- & 0.8921 \\ 
        \rex & \textbf{0.0297} & \textbf{0.026} & \textbf{1.0481} & 0.2947 & -- & \textbf{0.9739} \\ 
        \rise & 0.0787 & 0.0942 & 0.8906 & 0.2977 & -- & 0.8798 \\ 
                \bottomrule
      \end{tabular}  
      \end{subtable}
\end{table}
\begin{table}
\ContinuedFloat
    \centering  
    \phantomcaption
      \begin{subtable}{1\textwidth}
        \centering
                \caption{Results on ECSSD}
        \begin{tabular}{r|r|r|r|r|r|r}
        tool & area ($\downarrow$) & std & ins ($\uparrow$) & del & IN ($\uparrow$) & OUT ($\uparrow$) \\
        \midrule
        \midrule
        \gradcam & 0.0836 & 0.1288 & 1.0226 & 0.2915 & \textbf{0.6654} & -- \\ 
        \noise & 0.2695 & 0.276 & 0.7194 & 0.119 & 0.5955 & -- \\ 
        \lrp & 0.3545 & 0.292 & 0.5721 & 0.305 & 0.3418 & -- \\ 
        \ig & 0.4437 & 0.3016 & 0.5552 & 0.1377 & 0.4187 & -- \\ 
        \gs & 0.3525 & 0.3182 & 0.6444 & \textbf{0.0977} & 0.5379 & -- \\ 
        \ks & 0.7257 & 0.2403 & 0.2348 & 0.2369 & 0.244 & -- \\ 
        \lime & 0.1013 & 0.1579 & 1.0024 & 0.2404 & 0.6538 & -- \\ 
        \rex & \textbf{0.0423} & \textbf{0.0522} & \textbf{1.2176} & 0.2214 & 0.6205 & -- \\ 
        \rise & 0.1165 & 0.1599 & 0.95 & 0.2752 & 0.5451 & -- \\ 
      \end{tabular}  
      \end{subtable}  
\end{table}

\Cref{tab:resnet50_eval,tab:vit_eval,tab:conv_eval} show the experimental results of evaluation of \rex against $8$ other
\xai tools over $4$ datasets, with three different models. 
The columns in the tables are: mean area (area), $\sigma$ of area (standard deviation), 
normalized insertion curve AUC (ins), normalized deletion curve AUC (del), proportion of explanation inside the relevant segment (IN), 
and the fraction of the explanation that is outside the relevant segment (OUT), shown only where appropriate. 
Bold indicates best result in category for all columns. 
Note that while we have indicated lowest AUC for deletion as `best', lower here is not necessarily better, as we argue in~\Cref{subsec:discussion}.

\paragraph*{Resnet} \Cref{tab:resnet50_eval} shows the results with the ResNet50 model. \rex consistently produces the most smallest explanations. This is reinforced by the observations that \rex also produces the highest insertion AUC of any of the tools. \rex does particularly well on the `Photobombing` dataset, where its explanations are almost entirely ($0.97$) disjoint from the inserted occlusion 
(\Cref{fig:photo-bombing-examples}).
\Cref{fig:resnet-box} shows that \rex has the most concentrated explanations and consistent results, with very few outliers. 
The grey-box tools perform worse than the other methods. Note that \rex even outperforms \gradcam, a white-box method.

\begin{table}
    \caption{Results for ViT model.}
    \label{tab:vit_eval}
    \centering
        \begin{subtable}{1\textwidth}
            \centering
            \caption{Results on Voc}
        \begin{tabular}{r|r|r|r|r|r|r}
        tool & area ($\downarrow$) & std & ins ($\uparrow$) & del & IN ($\uparrow$) & OUT ($\uparrow$) \\ 
      \midrule
       \midrule
        \gradcam & 0.7148 & 0.181 & 0.2466 & 0.1889 & 0.1727 & -- \\ 
        \noise & 0.4988 & 0.3019 & 0.4565 & 0.0633 & 0.2315 & -- \\ 
        \ig & 0.5652 & 0.2632 & 0.4274 & 0.0943 & 0.1946 & -- \\ 
        \gs & 0.5021 & 0.2992 & 0.4761 & \textbf{0.0551} & 0.2267 & -- \\ 
        \ks & 0.669 & 0.2603 & 0.2579 & 0.2579 & 0.1686 & -- \\ 
        \lime & 0.277 & 0.2218 & \textbf{0.7191} & 0.112 & 0.26 & -- \\ 
        \rex & \textbf{0.2206} & \textbf{0.1711} & 0.6985 & 0.1377 & \textbf{0.5496} & -- \\ 
        \rise & 0.4049 & 0.2491 & 0.5986 & 0.097 & 0.2365 & -- \\ 
        \bottomrule
    \end{tabular}
      \end{subtable}
      \bigskip

      \begin{subtable}{1\textwidth}
            \centering
            \caption{Results on Imagenet}
        \begin{tabular}{r|r|r|r|r|r|r}
        tool & area ($\downarrow$) & std & ins ($\uparrow$) & del & IN ($\uparrow$) & OUT ($\uparrow$) \\
        \midrule
         \midrule
        \gradcam & 0.6277 & 0.203 & 0.2943 & 0.2538 & -- & -- \\ 
        \noise & 0.3903 & 0.2711 & 0.5341 & \textbf{0.098} & -- & -- \\ 
        \ig & 0.4706 & 0.2505 & 0.4833 & 0.1481 & -- & -- \\ 
        \gs & 0.3871 & 0.2691 & 0.5586 & 0.0916 & -- & -- \\ 
        \ks & 0.5483 & 0.2651 & 0.3541 & 0.3519 & -- & -- \\ 
        \lime & 0.2246 & 0.1937 & \textbf{0.7264} & 0.1668 & -- & -- \\ 
        \rex & \textbf{0.1689} & \textbf{0.133} & 0.687 & 0.2015 & -- & -- \\ 
        \rise & 0.3356 & 0.2333 & 0.5997 & 0.1664 & -- & -- \\ 
        \bottomrule
        \end{tabular}  
      \end{subtable}
     \bigskip

      \begin{subtable}{1\textwidth}
            \centering
            \caption{Results on ``Photobombing''}
        \begin{tabular}{r|r|r|r|r|r|r}
        tool & area ($\downarrow$) & std & ins ($\uparrow$) & del & IN ($\uparrow$) & OUT ($\uparrow$) \\
         \midrule
         \midrule
        \gradcam & 0.6025 & 0.1907 & 0.303 & 0.2715 & -- & 0.9196 \\ 
        \noise & 0.3647 & 0.2447 & 0.5439 & 0.1012 & -- & 0.8835 \\ 
        \ig & 0.4598 & 0.2251 & 0.4806 & 0.1486 & -- & 0.8889 \\ 
        \gs & 0.3726 & 0.2465 & 0.5607 & \textbf{0.0967} & -- & 0.8802 \\ 
        \ks & 0.5168 & 0.233 & 0.37 & 0.3665 & -- & 0.9186 \\ 
        \lime & 0.1748 & 0.1458 & \textbf{0.7365} & 0.1778 & -- & 0.8891 \\ 
        \rex & \textbf{0.1402} & \textbf{0.1046} & 0.6883 & 0.2073 & -- & \textbf{0.9551} \\ 
        \rise & 0.2803 & 0.2056 & 0.6119 & 0.1741 & --& 0.8915 \\ 
      \end{tabular}  
      \end{subtable}
      \bigskip
\end{table}

\begin{table}
\ContinuedFloat
    \centering  
    \phantomcaption
      \begin{subtable}{1\textwidth}
            \centering
            \caption{Results on ECSSD}
        \begin{tabular}{r|r|r|r|r|r|r}
        tool & area ($\downarrow$) & std & ins ($\uparrow$) & del & IN ($\uparrow$) & OUT ($\uparrow$) \\
        \midrule
         \midrule
        \gradcam & 0.6741 & 0.2181 & 0.2731 & 0.2344 & 0.2494 & -- \\ 
        \noise & 0.4298 & 0.3001 & 0.5371 & 0.0731 & 0.4296 & -- \\ 
        \ig & 0.4727 & 0.2723 & 0.5184 & 0.1082 & 0.3409 & -- \\ 
        \gs & 0.4179 & 0.2929 & 0.5772 & \textbf{0.0685} & 0.4134 & -- \\ 
        \ks & 0.6192 & 0.2729 & 0.321 & 0.3188 & 0.2466 & -- \\ 
        \lime & 0.2698 & 0.2319 & \textbf{0.7392} & 0.123 & \textbf{0.5548} & -- \\   
        \rex & \textbf{0.2008} & \textbf{0.1653} & 0.7286 & 0.1475 & 0.5334 & -- \\ 
        \rise & 0.3687 & 0.2585 & 0.6597 & 0.1178 & 0.4468 & -- \\ 
      \end{tabular}  
      \end{subtable}
\end{table}

\paragraph*{ViT} \Cref{tab:vit_eval} shows the results with the `vit\_b\_32' model from \textsc{torchvision}. 
Explanations across all tools are much larger than for the ResNet50, which seems to be a feature of this model. \rex, however, still manages to produce explanations with the lowest number of redundant pixels. In general, \rex, \lime and \gradcam are the best performing tools. 
Even in cases where \lime slightly outperforms \rex (\eg, on ECSSD IN), with $0.73$ of its explanation inside the segmentation against $0.72$, the effect is very small. \Cref{fig:vit-boxplot} shows that \rex again has the most concentrated explanations and consistent results,
with very few outliers. \lime is the closest comparable tool, though it has many more outliers and a higher median value. The significant
difference in explanation size between the ResNet model and ViT model is, perhaps, of independent interest.

\begin{table}[]
    \caption{Results for convnext model.}
    \label{tab:conv_eval}
    \centering
    \begin{subtable}{1\textwidth}
        \centering
        \caption{Results on Voc}
        \begin{tabular}{r|r|r|r|r|r|r}
        tool & area ($\downarrow$) & std & ins ($\uparrow$) & del & IN ($\uparrow$) & OUT ($\uparrow$) \\ 
      \midrule
       \midrule
        \gradcam & 0.0556 & 0.0948 & 0.8635 & 0.2497 & 0.3345 & -- \\ 
        \noise & 0.1972 & 0.2443 & 0.689 & 0.117 & 0.3614 & -- \\ 
        \ig & 0.4152 & 0.2565 & 0.465 & 0.2387 & 0.2025 & -- \\ 
        \gs & 0.2268 & 0.2609 & 0.6411 & \textbf{0.0998} & 0.3327 & -- \\ 
        \ks & 0.5739 & 0.3001 & 0.2926 & 0.2924 & 0.1678 & -- \\ 
        \lime & 0.0689 & 0.0972 & 0.8096 & 0.2908 & 0.3042 & -- \\ 
        \rex & \textbf{0.0361} & \textbf{0.038} & \textbf{0.876} & 0.2057 & \textbf{0.6295} & -- \\ 
        \rise & 0.0759 & 0.1129 & 0.8136 & 0.3005 & 0.2863 & -- \\ 
        \bottomrule
    \end{tabular}
      \end{subtable}
      \bigskip

      \begin{subtable}{1\textwidth}
        \centering
        \caption{Results on Imagenet}
        \begin{tabular}{r|r|r|r|r|r|r}
        tool & area ($\downarrow$) & std & ins ($\uparrow$) & del & IN ($\uparrow$) & OUT ($\uparrow$) \\
        \midrule
         \midrule
        \gradcam & 0.0337 & 0.0553 & 0.8613 & 0.4413 & -- & -- \\ 
        \noise & 0.1116 & 0.1608 & 0.7682 & \textbf{0.1886} & -- & -- \\ 
        \ig & 0.3414 & 0.2168 & 0.4993 & 0.3471 & -- & -- \\ 
        \gs & 0.1394 & 0.183 & 0.706 & 0.1678 & -- & -- \\ 
        \ks & 0.4357 & 0.2855 & 0.4042 & 0.4061 & -- & -- \\ 
        \lime & 0.0493 & 0.0707 & 0.8228 & 0.4747 & -- & -- \\ 
        \rex & \textbf{0.0287} & \textbf{0.0236} & \textbf{0.8788} & 0.3487 & -- & -- \\ 
        \rise & 0.0615 & 0.084 & 0.8058 & 0.5136 & -- & -- \\ 
                \bottomrule
      \end{tabular}  
      \end{subtable}
     \bigskip  
    
    \begin{subtable}{1\textwidth}
            \centering
            \caption{Results on ``Photobombing''}
        \begin{tabular}{r|r|r|r|r|r|r}
        tool & area ($\downarrow$) & std & ins ($\uparrow$) & del & IN ($\uparrow$) & OUT ($\uparrow$) \\
        \midrule
         \midrule
        \gradcam & 0.0426 & 0.0527 & 0.8584 & 0.3688 & -- & 0.8695 \\ 
        \noise & 0.0768 & 0.0925 & 0.7916 & \textbf{0.2077} & -- & 0.8755 \\ 
        \ig & 0.3144 & 0.1731 & 0.5454 & 0.3847 & -- & 0.8783 \\ 
        \gs & 0.1335 & 0.1351 & 0.7095 & 0.2186 & -- & 0.85 \\ 
        \ks & 0.3829 & 0.2405 & 0.4683 & 0.468 & -- & 0.9183 \\ 
        \lime & 0.0513 & 0.0551 & \textbf{0.8643} & 0.4161 & -- & 0.8947 \\ 
        \rex & \textbf{0.0296} & \textbf{0.0236} & 0.8475 & 0.3173 & -- & \textbf{0.9748} \\ 
        \rise & 0.063 & 0.0723 & 0.8129 & 0.4769 & -- & 0.9015 \\ 
                \bottomrule
      \end{tabular}  
      \end{subtable}
      \bigskip
\end{table}
\begin{table}
\ContinuedFloat
\phantomcaption
    \centering             
      \begin{subtable}{1\textwidth}
            \centering
            \caption{Results on ECSSD}
        \begin{tabular}{r|r|r|r|r|r|r}
        tool & area ($\downarrow$) & std & ins ($\uparrow$) & del & IN ($\uparrow$) & OUT ($\uparrow$) \\
        \midrule
         \midrule
        \gradcam & 0.0574 & 0.1023 & 0.8865 & 0.2694 & \textbf{0.6846} & -- \\ 
        \noise & 0.2164 & 0.2734 & 0.6424 & 0.1067 & 0.5913 & -- \\ 
        \ig & 0.421 & 0.2925 & 0.4202 & 0.2102 & 0.363 & -- \\ 
        \gs & 0.2434 & 0.284 & 0.5911 & \textbf{0.0987} & 0.5671 & -- \\ 
        \ks & 0.5616 & 0.3152 & 0.2898 & 0.2977 & 0.2468 &-- \\ 
        \lime & 0.0849 & 0.1377 & 0.7157 & 0.3183 & 0.5841 & -- \\ 
        \rex & \textbf{0.0364} & \textbf{0.0405} & \textbf{0.9012} & 0.2077 & 0.5722 & -- \\ 
        \rise & 0.0823 & 0.1387 & 0.7996 & 0.3073 & 0.5045 & -- \\ 
      \end{tabular}  
      \end{subtable}  
\end{table}

\paragraph*{ConvNext} \Cref{tab:conv_eval} shows the results with the `convnext\_large' model from \textsc{torchvision}. \rex, \lime, \gradcam and \rise are the best performing tools. It is interesting to note that $3$ of these tools fall into our black-box category. \Cref{fig:conv-boxplot} shows that \rex has the best performance again, this time slightly ahead of \gradcam, as \gradcam has many more outliers than \rex and a slightly larger inter-quartile range. Explanation size is comparable with that of the ResNet model, 
revealing the ViT model to be somewhat of an outlier.

\begin{figure}[t]
    \centering
    \includegraphics[scale=0.8]{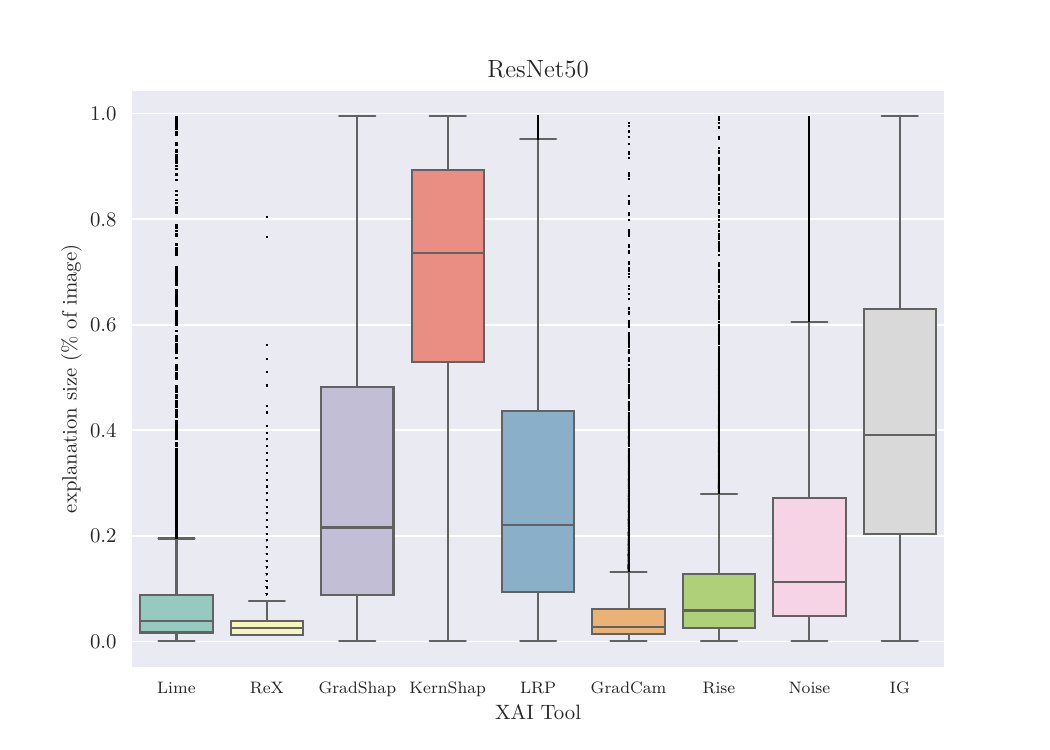}
    \caption{Box plot of all tools over all datasets with a ResNet50 model. \rex has the lowest median value and also the smallest number of outliers.}
    \label{fig:resnet-box}
\end{figure}

\begin{figure}[t]
    \centering
    \includegraphics[scale=0.8]{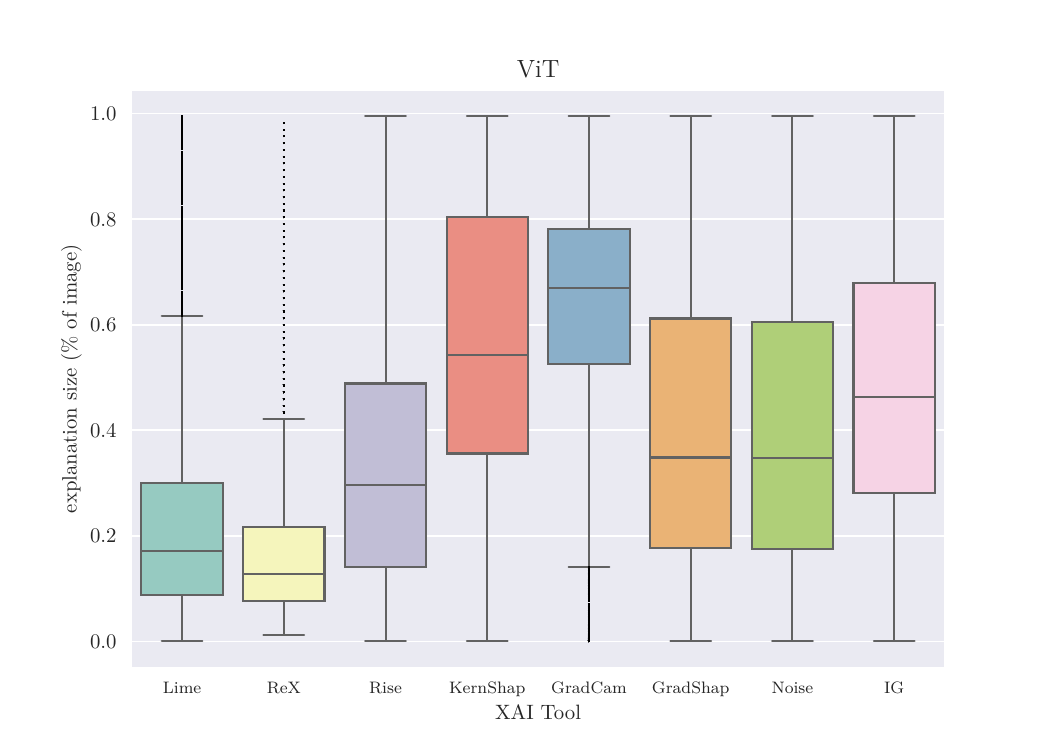}
    \caption{Box plot of all tools over all datasets with a ViT model. \rex has the lowest median value and also the smallest number of outliers. ViT explanations are noticeably larger than those for ResNet50 and ConvNext.}
    \label{fig:vit-boxplot}
\end{figure}

\begin{figure}[t]
    \centering
    \includegraphics[scale=0.8]{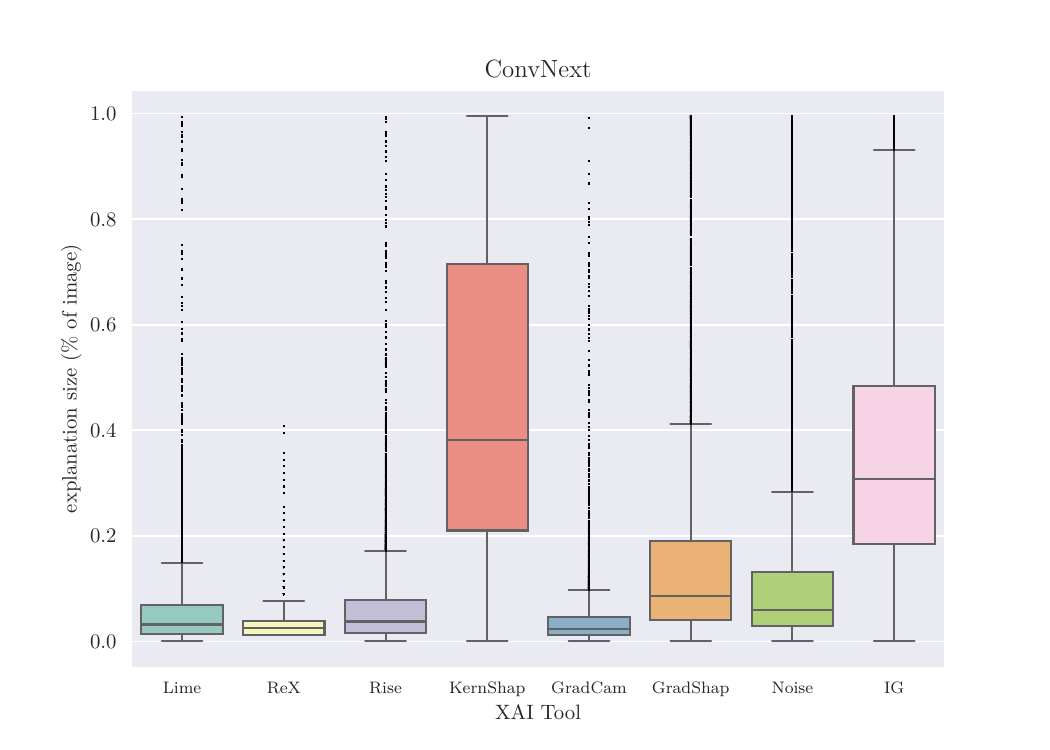}
    \caption{Box plot of all tools over all datasets with a ConvNext model. \rex has the lowest median value and also the smallest number of outliers. It even outperforms \gradcam, a purely white-box tool.}
    \label{fig:conv-boxplot}
\end{figure}

\clearpage
\paragraph*{Runtime} 
We measured the running time of all tools. As all tools were evaluated on the same hardware with the same models and
datasets, the results should be consistent. The white and grey-box methods are clearly the fastest, as expected. Indeed,
\gradcam is essentially instantaneous, as the overhead over one call to the model is insignificant. No black-box tool
can compete with this, by the nature of black-box tools. All the gradient-based methods took $\approx 3$ seconds to
produce initial maps. Note that this is the time taken to generate initial saliency, as minimal pixel subsets still need
to be extracted.
Both \rex and \lime took $\approx 4$s to produce maps, with \rise being much slower ($\approx 10$s). Taking into account the extra time for the explanation extraction (which only \rex does by default), the efficiency picture changes slightly. \rex{}'s total compute time increases only slightly (to $\approx 6$s) due the the uniformly small size of its explanations. Other tools suffer more, even \gradcam{}'s average compute time increases to a total of $\approx 4$s. This is due to the relatively noisy output of \gradcam, requiring more work from the extraction algorithm as a result. \rex is an efficient black-box tool: for \textbf{RQ6} therefore, our results show that there is little trade-off between quality and compute cost. 

\subsection{Discussion}\label{subsec:discussion}

\begin{figure}[t]
    \centering
    \begin{subfigure}{0.45\linewidth}
        \centering
        \includegraphics[scale=0.5]{starfishmap.png}
        \caption{Starfish}\label{fig:starfish-map}
    \end{subfigure}
        \begin{subfigure}{0.45\linewidth}
        \centering
        \includegraphics[scale=0.35]{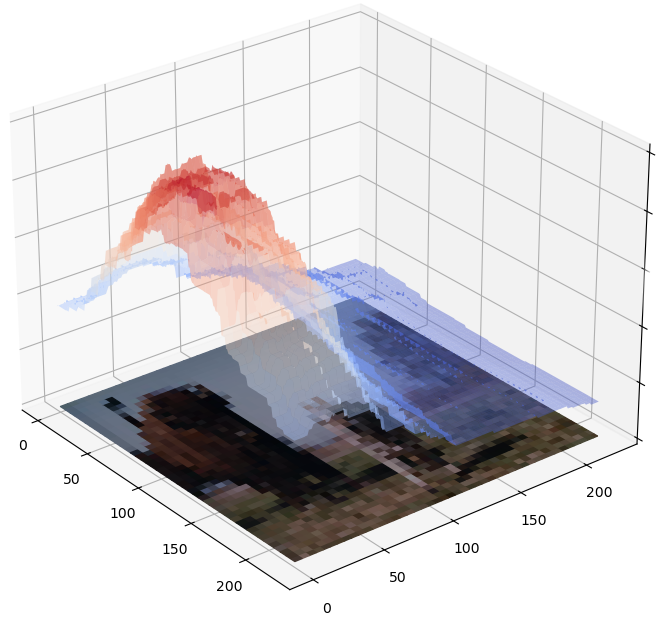}
        \caption{Bison}\label{fig:bison-map}
    \end{subfigure}
    \caption{An image with a high deletion curves does not indicate a low quality explanation. The starfish (\Cref{fig:starfish-map}) has a normalized deletion of $0.32$ because their are multiple points of high responsibility in the image. These points indicate the presence of multiple explanations. 
    The bison (\Cref{fig:bison-map}) has a low deletion of $0.1$ because there is only one point of interest in the image. 
    Deletion is a difficult measure to interpret.}
    \label{fig:deletion-curves}
\end{figure}

\paragraph*{Utility and interpretation of deletion curves}\label{subsec:delete}
\citet{petsiuk2018rise} stated that a high insertion and a low deletion is preferable. This assertion relies on the assumption that there is only one explanation for the image's classification, or that the \xai tool finds only one explanation. For deletion AUC to remain low, saliency values outside of the single `explanation' must be strictly uninformative: the saliency map cannot provide information about any other substructures present in the image. It must destroy that signal. Essentially, pixel deletion after the first discovered explanatory pixel set must be close to random, breaking up information from other potential explanations. Given that a large body of work is dedicated to `cleaning up' saliency maps, especially from \gradcam-style tools~\citep{noisetunnel,smilkov2017smoothgradremovingnoiseadding,gradcamplusplus}, it is not surprising that this genuine signal is lost.

\rex does not work like this: the responsibility map provides detailed information over the entire pixel space. Because the map is not itself the explanation, there is no motivation to remove or delete `inconvenient' hot-spots. Moreover, because \rex is not using activation, which can be noisy, every hot-spot in the \rex map is indeed associated to some degree with the desired classification. \rex utilizes the richness of its responsibility map to discover multiple, different explanations of an image~\citep{CKK25}. This is unlike the other tools in our experimental comparison, all of which return one explanation by design.

As an example, if we examine~\Cref{fig:starfish-map}, we have an image with a normalized insertion curve of $0.97$ but a normalized deletion curve of $0.32$. There would appear to be tension between the two numbers, as a high insertion suggests a high quality explanation whereas a high deletion suggests a low quality explanation. The mystery is resolved, however, by noting that the image contains multiple sufficient explanations. Indeed, there are four distinct peaks of responsibility in~\Cref{fig:starfish-map}, each of which corresponds to an independent explanation for starfish (\Cref{fig:starfish-multi}). It takes a long time to delete all of the pixels from the well-defined peaks in the responsibility map.

\begin{figure}[t]
    \centering
    \begin{subfigure}{0.19\linewidth}
        \centering
        \includegraphics[scale=0.28]{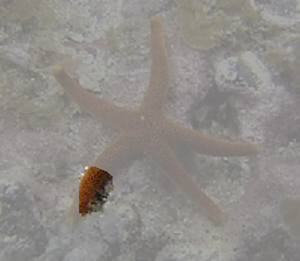}
        \caption{Explanation 1}\label{fig:starfish-exp1}
    \end{subfigure}
    \hfill
    \begin{subfigure}{0.19\linewidth}
        \centering
        \includegraphics[scale=0.28]{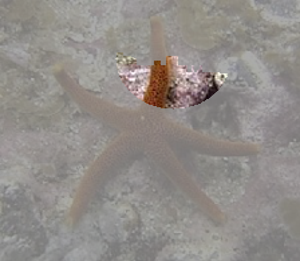}
        \caption{Explanation 2}\label{fig:starfish-exp2}
    \end{subfigure}
    \hfill
    \begin{subfigure}{0.19\linewidth}
        \centering
        \includegraphics[scale=0.28]{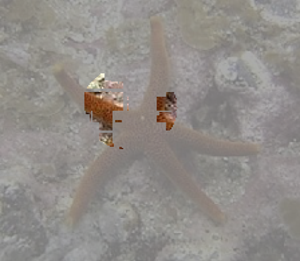}
        \caption{Explanation 3}\label{fig:starfish-exp3}
    \end{subfigure}
    \hfill
    \begin{subfigure}{0.19\linewidth}
        \centering
        \includegraphics[scale=0.28]{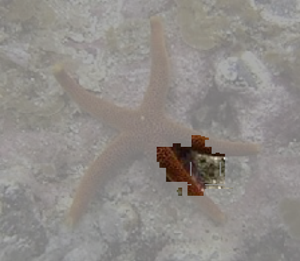}
        \caption{Explanation 4}\label{fig:starfish-exp4}
    \end{subfigure}
    \caption{The responsibility map \rex provides is sufficiently detailed to extract multiple valid explanations for most images. These starfish explanations were extracted from~\Cref{fig:starfish-map}.}
    \label{fig:starfish-multi}
\end{figure}

In~\Cref{tab:resnet50_eval,tab:vit_eval,tab:conv_eval} we follow the usual procedure of indicating the lowest deletion curve value, but stress that 
we do not consider the deletion curve as a robust measure for explanation quality.

\paragraph*{Differences between methods}
Broadly speaking, the tools can be split into three different sets. The gradient-based methods perform least well on all metrics, including overlap with a human-provided segmentation mask. The explanations they provide are diffuse and large. \noise uses \ig as its primary saliency method, and in return for a relatively small computational overhead, \noise greatly improves the explanation quality over the \ig baseline. 

\Cref{fig:cum_plots} shows cumulative plots of explanation sizes over the different tested models. Again, \rex dominates the other curves. \lime and \gradcam are, in general, the next best performing tools. The poor performance (almost linear) of \ks is interesting, suggesting that image explainability is not a natural fit for this method.

\begin{figure}[t]
    \centering
    \begin{subfigure}{0.3\linewidth}
      \centering
      \includegraphics[scale=0.25]{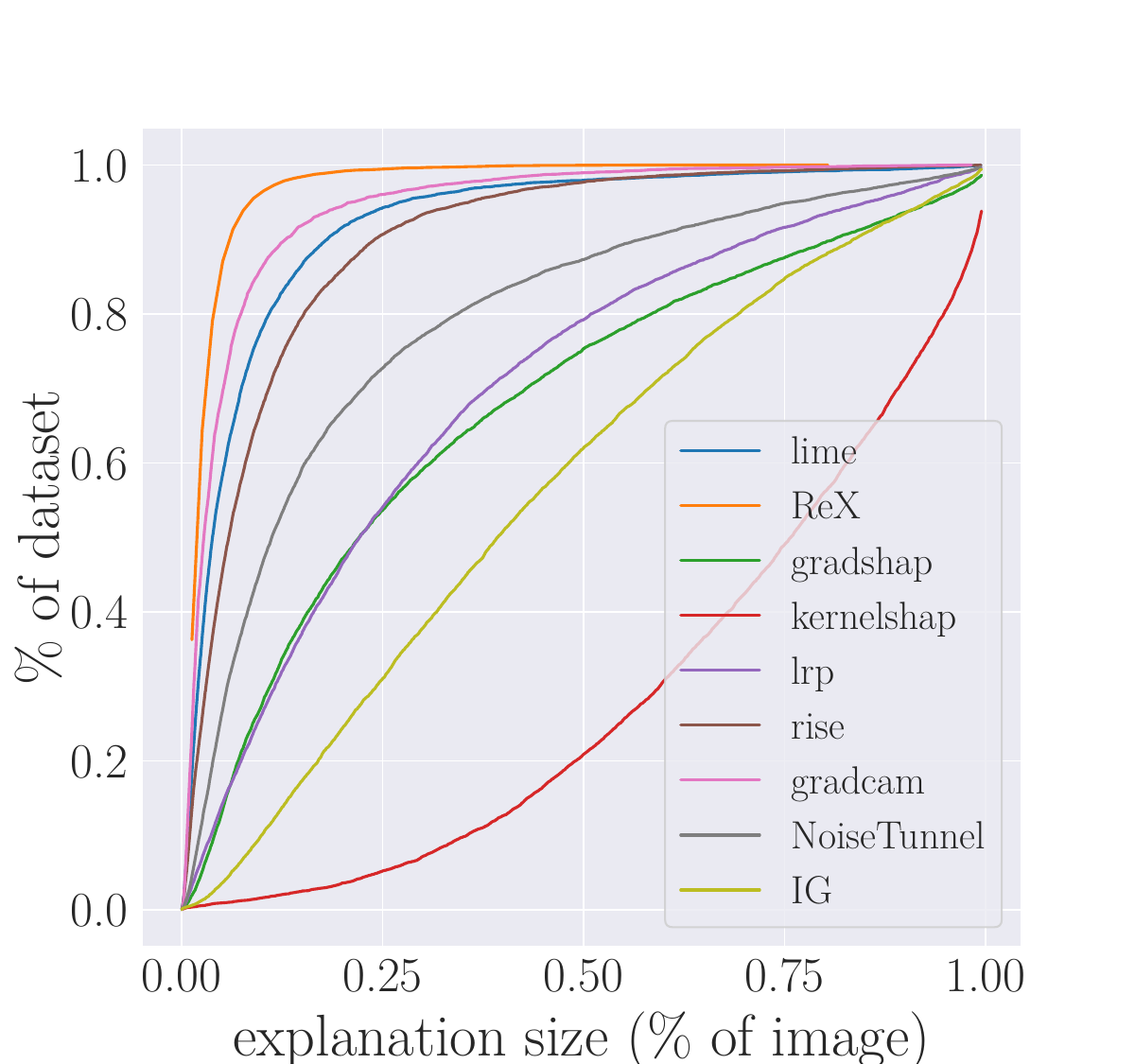}
        \caption{ResNet50}%
        \label{fig:resnet_area}  
    \end{subfigure}
    \begin{subfigure}{0.3\linewidth}
        \centering
        \includegraphics[scale=0.25]{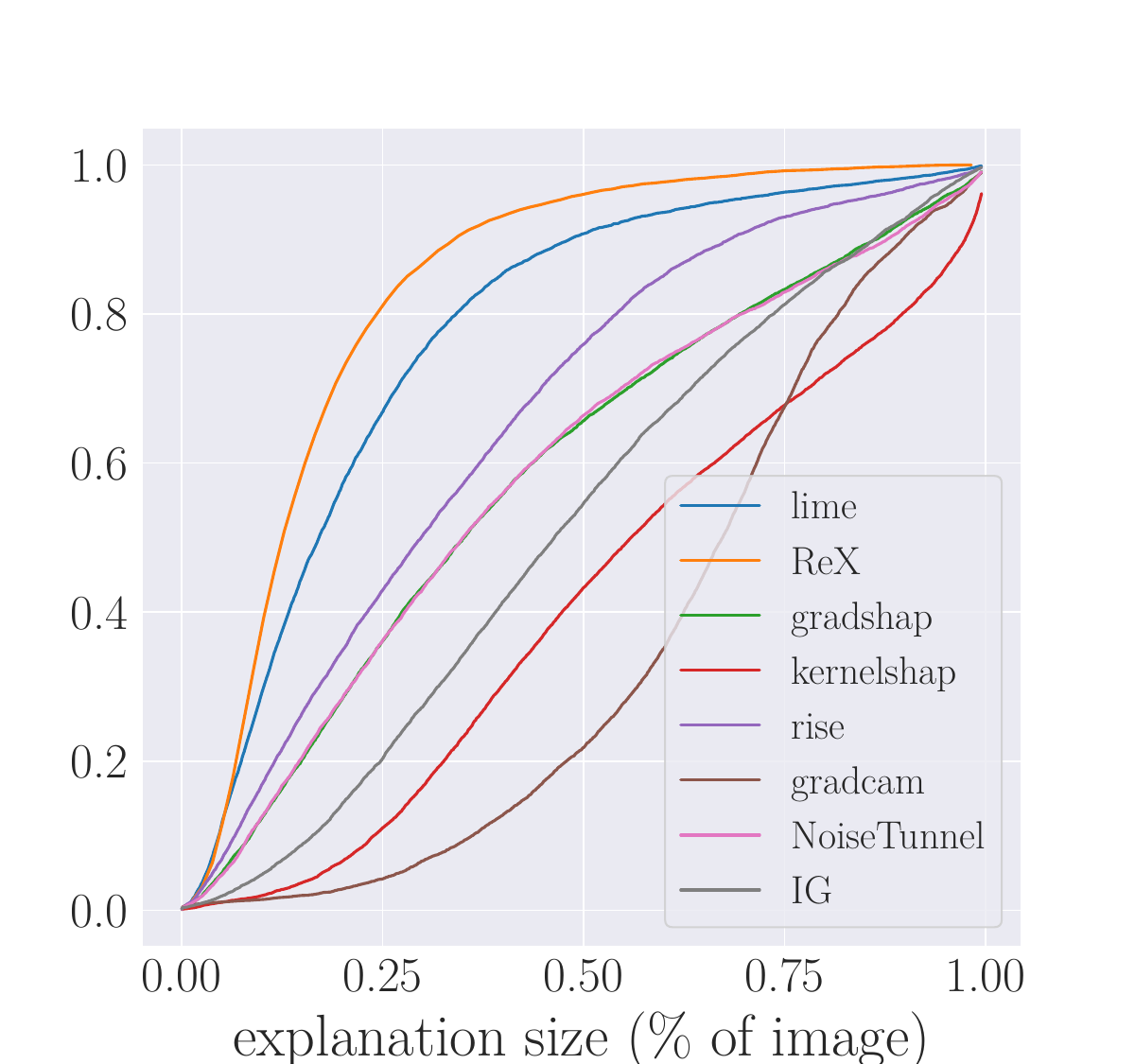}
        \caption{ViT}%
        \label{fig:vit_area}  
    \end{subfigure}
    \begin{subfigure}{0.3\linewidth}
        \centering
        \includegraphics[scale=0.25]{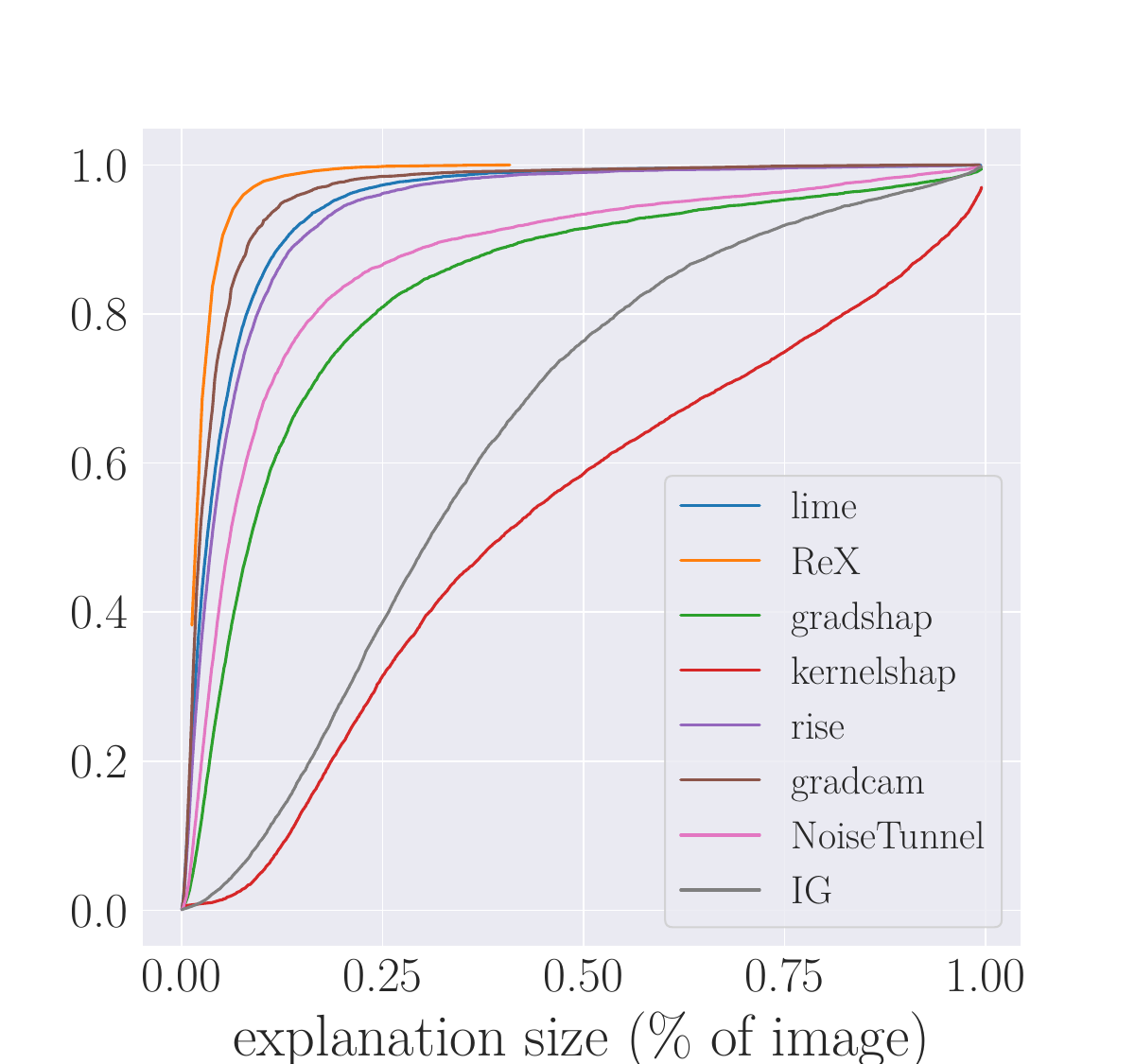}
        \caption{ConvNext}%
        \label{fig:conv_area}  
    \end{subfigure}
    \caption{Cumulative plots of explanations sizes with $3$ different models over all datasets. \rex consistently produces smaller explanations, meaning it identifies precisely those pixels required for the classification, with fewer unnecessary pixels.}
    \label{fig:cum_plots}
\end{figure}

\lime, when used for image classification, works best when provided with a segmentation mask of the image. For common datasets such as the ones we investigate, many high quality algorithms exist. \rex does not require such existing segmentation masks. \citet{blake2024explainable} conducted an investigation on brain MRI explainability using a different model and different data set and a similar, though not overlapping, set of explainability tools. On this dataset, \lime{}'s performance is hampered by medically meaningless segmentation. They found that \rex did not suffer from this problem, and performed well compared to the other tools on this different model and data. 
The reason for that is that \rex does not rely on segmentation at all. Instead, as described in \Cref{sec:algorithm}, it performs an iterative refinement by randomly partitioning
the input and averages the results over a number of different random partitions. 

The performance of \rex is notable across all datasets for being very consistent in terms of the evaluation metrics. The other tools examined are not as consistent.
Note that the values for IN and OUT in~\Cref{tab:resnet50_eval,tab:vit_eval,tab:conv_eval} are percentages of the explanation contained (or outside) the segmentation. As \rex{}'s explanations are already smaller than the others, this means that, in absolute terms, \rex also has fewer extraneous pixels. This holds because other tools (such as \lime) may have similar IN values, but a larger explanation size.

\begin{figure}
    \centering
    \subfloat[\lab{West Highland white terrier}]{
    \includegraphics[width=0.22\linewidth]{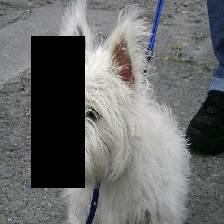}
    \includegraphics[width=0.22\linewidth]{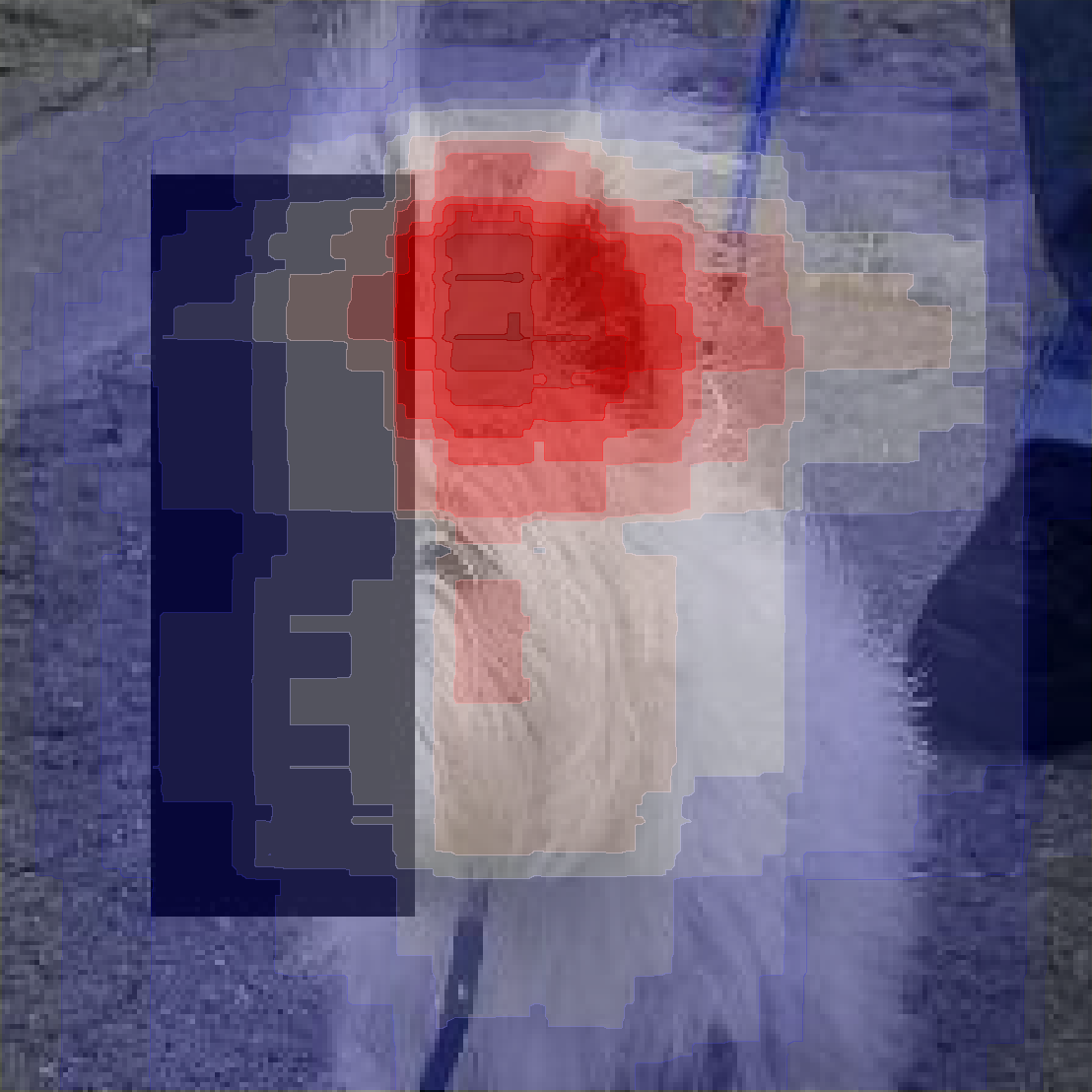}
    }
    \subfloat[\lab{Ocean liner}]{
    \includegraphics[width=0.22\linewidth]{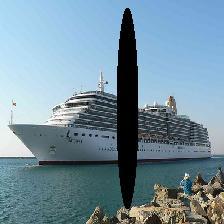}
    \includegraphics[width=0.22\linewidth]{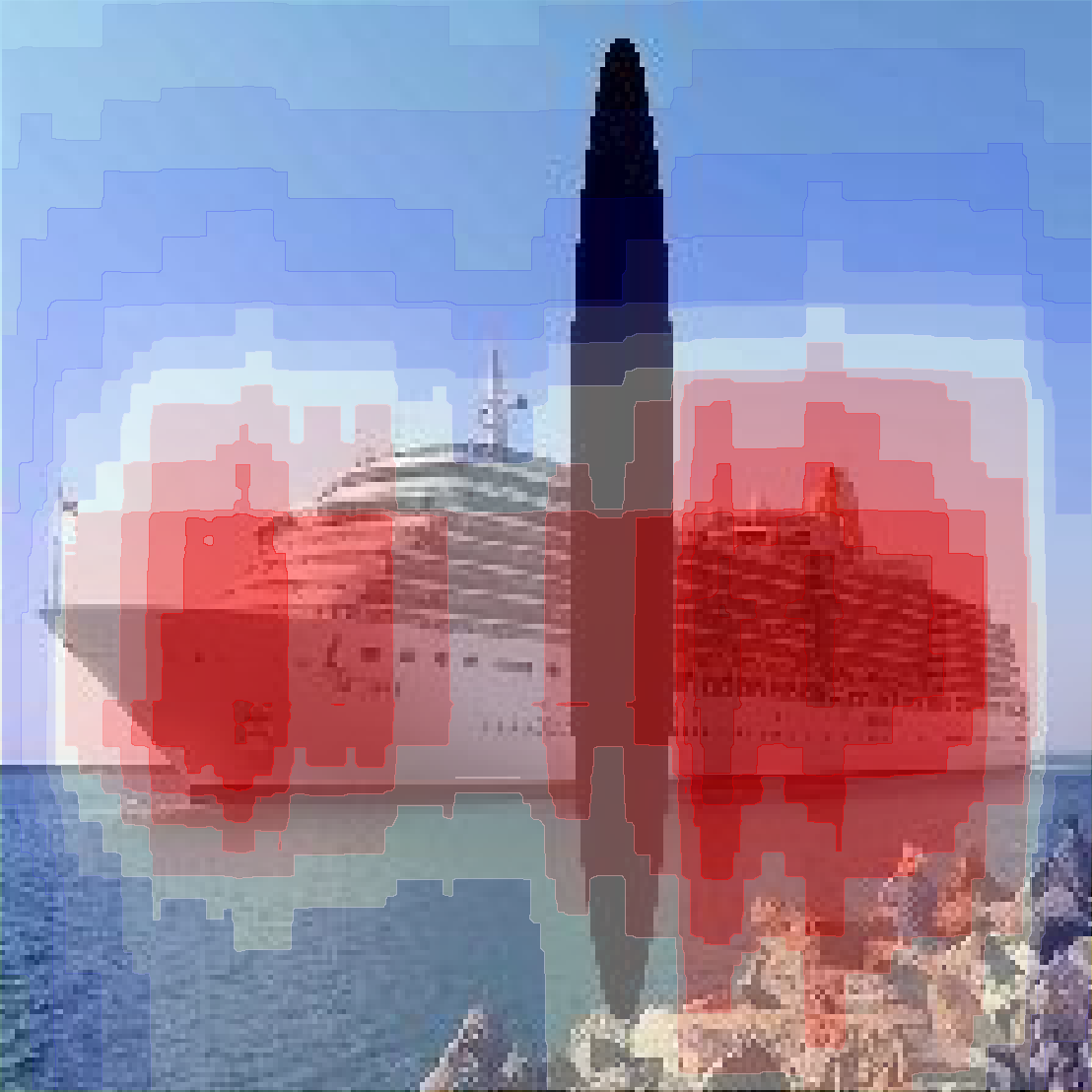}
    }
    \caption{Photo Bombing images and responsibility maps from \rex}
    \label{fig:photo-bombing-examples}
\end{figure}

We evaluated the results produced by \rex against those produced by other explainability tools on a number of standard measures proposed in the literature, such
as the size of explanations, insertion and deletion curves. For the datasets containing the ground truth or some part of it, such as the segmentation information or
the partial occlusion, we also measured the percentage of the explanation that is clearly outside of an area where it should be contained, such as the occlusion in
partially occluded areas (\Cref{fig:photo-bombing-examples}). For all other
measures, \rex shows superior performance, which answers \textbf{RQ5}.

Finally, for \textbf{RQ7}, we did not exhaustively compare the results against all possible explainability methods. Rather, we chose the state-of-the-art primary attribution methods from the Captum library.

\commentout{
\subsection{ImageNet data with ground truth explanations}
\label{sec:roaming-panda}

Experiments on \cet so far shows leading performance on more effectively detecting the important features of input images, which have less intersection with the occlusions. On the other hand, we want to make sure that \cet approach also works on ``normal'' images. In this experiment, we compare \cet with other tools using the ``Roaming Panda'' data set~\citep{sun2020explaining}, as in Figure~\ref{fig:example-roaming-panda}.

We compare the number of cases that the ground truth explanation is successfully detected, with intersection of union (IoU) larger than 0.5, by each explanation tool. We confirm that \cet has the best performance such that more than 70\%
of the cases the ground truth have been successfully detected (the number in the parenthesis is the percentage of successful detection by each tool):
 \rap (91.0\%) $>$ \deepcover (76.7\%) $>$ \cet (\textbf{72.3\%}) $>$ \extremal (70.7\%) $>$ \rise (55.8\%) $>$ \lrp (53.8\%) $>$ \gbp (20.8\%) $>$ \ig (12.2\%).
In contrast to the explanations for partial-occlusion images, \rap delivers the best performance on the ``roaming panda'' data set. However, \cet's results are nearly on par, and better than those by most other tools. 
We observe that with or without occlusion does impact the performance of explanation tools, and \cet achieves an overall better performance than other tools that are validated using complementary metrics in the evaluation.

\begin{figure}
    \centering
    \subfloat[Input]{
    }\hspace{0.5cm}
    \subfloat[\cet \scriptsize(VGG16)]{
    }\hspace{0.5cm}
    \subfloat[\cet \scriptsize(MobileNet)]{
    }
    \caption{The ``roaming panda'' serves as the ground truth for the label \lab{red panda}.
    \cet explains it on VGG16 and MobileNet models.}
    \label{fig:example-roaming-panda}
\end{figure}
}
\commentout{
\subsection{Threats to validity}

The following are the threats to the validity of our claims.
\begin{itemize}
    \item There is a lack of benchmark images with ground truth explanations and/or occlusions. As a result, we measure the quality of explanations using several \emph{proxy} metrics, including the explanation size and the intersection of the explanation with artificially added occlusion.
    
     \item Even though we compare \cet with the most recent explanation tools,
    there exist many other tools that might, in theory,
    deliver better performance.
\end{itemize}
}

\section{Conclusions and Future Work}
\label{sec:conclusions}

\if0
Motivated by the inherent compositionality in computer vision, this paper proposes a
compositional approach for explaining the result of image classifiers. Owing to its
compositional approach, \cet delivers the best explanations when explaining images
with occlusions: its explanations feature the least amount of intersection with
the occluded part of the image. \cet delivers performance that is on par with the existing tools on regular (un-occluded) ImageNet inputs. The algorithm
is extremely suitable for parallelization, and it is straightforward to control
the trade-off between the precision and the computational cost.
\fi

In this paper we described an approach to explainability of image classifiers that is rooted in actual causality and computes
explanations based on the formal definitions of sufficient explanations. As the exact computation is intractable, we described a modular 
algorithm for computing approximate explanations using a \emph{causal ranking} of parts of the
input image, viewing the classifier as black-box. 
Our experiments demonstrate that an implementation of our algorithm in the tool \rex produces results that are superior to 
those of state-of-the-art tools according to standard measures. 

In the future, we will apply \rex to other modalities that lend themselves to occlusion-based reasoning, such as tabular data, time series,
and spectroscopy. We will also explore different domains of application, where precise explanations are instrumental, in
particular mission-critical and safety-critical domains, such as in healthcare AI and autonomous vehicles. The healthcare domain will require
some adaptations of our algorithms, since the quality of medical images is different from the quality of general images (\eg, there are fewer
colors in an X-ray than in a general image). In the automotive domain, we will adapt \rex to analyze object detectors and also devise
new algorithms that work in (near) real-time, to address the needs of explainability for autonomous vehicles.

\section*{Acknowledgments}
A part of this research, describing explanations for partially occluded images, appeared in the Proceedings of the ICCV conference in 2021, titled ``Explanations for Occluded Images''.

Hana Chockler and David A. Kelly are partially supported by the UKRI AI program and the Engineering and Physical Sciences Research Council 
for CHAI -- Causality in Healthcare AI Hub [grant number EP/Y028856/1]. 

The authors are very grateful to Sander Beckers for his careful reading of an earlier draft and constructive suggestions regarding a simplified causal framework.

The authors are extremely grateful to Joe Halpern, who sadly passed away in February 2026. We discussed the core ideas
with Joe, and his comments shaped the way we think about causality in the context of image classification. His
intellectual curiosity and his vision continue to inspire us.\\\\

\printbibliography

@inproceedings{anchors,
  author       = {Marco T{\'{u}}lio Ribeiro and
                  Sameer Singh and
                  Carlos Guestrin},
  title        = {Anchors: High-Precision Model-Agnostic Explanations},
  booktitle    = {Proceedings of the Thirty-Second {AAAI} Conference on Artificial Intelligence,
                  (AAAI-18)},
  pages        = {1527--1535},
  publisher    = {{AAAI} Press},
  year         = {2018},
}

@article{PY84,
  author    = {C. H. Papadimitriou and
               M. Yannakakis},
  title     = {The Complexity of Facets (and Some Facets of Complexity)},
  journal   = {J. Comput. Syst. Sci.},
  volume    = {28},
  number    = {2},
  year      = {1984},
  pages     = {244-259}
}

@inproceedings{SHG20,
  author       = {Shubham Sharma and
                  Jette Henderson and
                  Joydeep Ghosh},
  title        = {{CERTIFAI:} {A} Common Framework to Provide Explanations and Analyse
                  the Fairness and Robustness of Black-box Models},
  booktitle    = {{AIES} '20: {AAAI/ACM} Conference on AI, Ethics, and Society, New
                  York, NY, USA, February 7-8, 2020},
  pages        = {166--172},
  publisher    = {{ACM}},
  year         = {2020}
}

@inproceedings{USL19,
  author       = {Berk Ustun and
                  Alexander Spangher and
                  Yang Liu},
  title        = {Actionable Recourse in Linear Classification},
  booktitle    = {Proceedings of the Conference on Fairness, Accountability, and Transparency,
                  FAT* 2019, Atlanta, GA, USA, January 29-31, 2019},
  pages        = {10--19},
  publisher    = {{ACM}},
  year         = {2019}
}

@inproceedings{PSSBF20,
  author       = {Rafael Poyiadzi and
                  Kacper Sokol and
                  Ra{\'{u}}l Santos{-}Rodr{\'{\i}}guez and
                  Tijl De Bie and
                  Peter A. Flach},
  title        = {{FACE:} Feasible and Actionable Counterfactual Explanations},
  booktitle    = {{AIES} '20: {AAAI/ACM} Conference on AI, Ethics, and Society, New
                  York, NY, USA, February 7-8, 2020},
  pages        = {344--350},
  publisher    = {{ACM}},
  year         = {2020}
}

@inproceedings{MMTS21,
  author       = {Ramaravind Kommiya Mothilal and
                  Divyat Mahajan and
                  Chenhao Tan and
                  Amit Sharma},
  title        = {Towards Unifying Feature Attribution and Counterfactual Explanations:
                  Different Means to the Same End},
  booktitle    = {{AIES} '21: {AAAI/ACM} Conference on AI, Ethics, and Society, Virtual
                  Event, USA, May 19-21, 2021},
  pages        = {652--663},
  publisher    = {{ACM}},
  year         = {2021}
}

@inproceedings{CH24,
      title={Explaining Image Classifiers}, 
      author={Hana Chockler and Joseph Y. Halpern},
      year={2024},
      booktitle={Proceedings of the 21st International Conference on Principles of Knowledge Representation and Reasoning, {KR}}
}

@article{Pap84,
	author	    =	"C.H. Papadimitriou",
	title	    =	"The Complexity of Unique Solutions",
        journal     =   "Journal of {ACM}",
        volume      =   31,
        pages       =   "492--500",
	year	    =	"1984"
}

@book{Hal19,
    author = {Joseph Y. Halpern},
    title  = {Actual Causality},
    year = 2019,
    publisher = {The MIT Press}
}

@inproceedings{springenberg2015striving,
  title={Striving for Simplicity: The All Convolutional Net},
  author    = {Jost Tobias Springenberg and
               Alexey Dosovitskiy and
               Thomas Brox and
               Martin A. Riedmiller},
  booktitle={ICLR (Workshop Track)},
  url       = {http://arxiv.org/abs/1412.6806},
  year={2015}
}

@inproceedings{sundararajan2017axiomatic,
  title={Axiomatic Attribution for Deep Networks},
  author={Sundararajan, Mukund and Taly, Ankur and Yan, Qiqi},
  booktitle={International Conference on Machine Learning},
  pages={3319--3328},
  year={2017},
  organization={PMLR}
}

@article{CH04,
  author    = {Hana Chockler and
               Joseph Y. Halpern},
  title     = {Responsibility and Blame: {A} Structural-Model Approach},
  journal   = {J. Artif. Intell. Res.},
  volume    = {22},
  pages     = {93--115},
  year      = {2004}
}

@inproceedings{Hal15,
  author    = {Joseph Y. Halpern},
  title     = {A Modification of the {H}alpern--{P}earl Definition of Causality},
  booktitle = {Proceedings of {IJCAI}},
  pages     = {3022--3033},
  publisher = {{AAAI} Press},
  year      = {2015}
}

@inproceedings{sun2020explaining,
  title={Explaining Image Classifiers using Statistical Fault Localization},
  author={Sun, Youcheng and Chockler, Hana and Huang, Xiaowei and Kroening, Daniel},
  booktitle={ECCV, Part XXVIII},
  pages = {391--406},
  series = {LNCS},
  volume = {12373},
  publisher = {Springer},
  year={2020}
}

@inproceedings{petsiuk2018rise,
  author    = {Vitali Petsiuk and
               Abir Das and
               Kate Saenko},
  title     = {{RISE:} Randomized Input Sampling for Explanation of Black-box Models},
  year      = {2018},
  booktitle = {British Machine Vision Conference ({BMVC})},
  publisher = {{BMVA} Press},
}

@inproceedings{CAM,
  title={Grad-{CAM}: Visual Explanations from Deep Networks via Gradient-based Localization},
  author={Selvaraju, Ramprasaath R. and Cogswell, Michael and Das, Abhishek and Vedantam, Ramakrishna and Parikh, Devi and Batra, Dhruv},
  booktitle={International Conference on Computer Vision (ICCV)},
  publisher={IEEE},
  pages={618--626},
  year={2017}
}

@article{LRP,
  author    = {Sebastian Bach and Alexander Binder and Gr{\'e}goire Montavon and Frederick Klauschen and Klaus-Robert M{\"u}ller and Wojciech Samek},
  title     = {On Pixel-Wise Explanations for Non-Linear Classifier Decisions by Layer-Wise Relevance Propagation},
  journal   = {PLoS ONE},
  volume    = {10},
  number    = {7},
  year      = {2015}
}

@inproceedings{deeplift,
  title={Learning important features through propagating activation differences},
  author={Shrikumar, Avanti and Greenside, Peyton and Kundaje, Anshul},
  booktitle={ICML},
  volume={70},
  pages={3145--3153},
  year={2017},
  organization={JMLR.org}
}

@inproceedings{lundberg2017unified,
  title={A Unified Approach to Interpreting Model Predictions},
  author={Lundberg, Scott M. and Lee, Su-In},
  booktitle={Advances in Neural Information Processing Systems (Neur{IPS})},
  pages={4765--4774},
  volume = {30},
  year={2017}
}

@inproceedings{lime,
  author    = {Marco Tulio Ribeiro and
               Sameer Singh and
               Carlos Guestrin},
  title     = {``{W}hy Should {I} Trust You?'' {E}xplaining the Predictions of Any Classifier},
  booktitle = {Knowledge Discovery and Data Mining (KDD)},
  pages     = {1135--1144},
  publisher = {{ACM}},
  year      = {2016},
}

@article{imagenet,
Author = {Olga Russakovsky and Jia Deng and Hao Su and Jonathan Krause and Sanjeev Satheesh and Sean Ma and Zhiheng Huang and Andrej Karpathy and Aditya Khosla and Michael Bernstein and Alexander C. Berg and Li Fei-Fei},
Title = {{ImageNet Large Scale Visual Recognition Challenge}},
Year = {2015},
journal   = {International Journal of Computer Vision (IJCV)},
doi = {10.1007/s11263-015-0816-y},
volume={115},
number={3},
pages={211-252}
}

@inproceedings{CH97,
  author    = {Urszula Chajewska and
               Joseph Y. Halpern},
  title     = {Defining Explanation in Probabilistic Systems},
  booktitle = {Uncertainty in Artificial Intelligence (UAI)},
  pages     = {62--71},
  year      = {1997},
  publisher = {Morgan Kaufmann}
}

@book{Gar88,
  author    = {Peter G{\"{a}}rdenfors},
  title = {Knowledge in Flux},
  publisher = {MIT Press},
  year = {1988}
}

@book{Pea88,
   author = {Judea Pearl},
   title = {Probabilistic Reasoning in Intelligent Systems},
   publisher = {Morgan Kaufmann},
   year = {1988}
}

@book{Hem65,
   author = {Carl Gustav Hempel},
   title = {Aspects of Scientific Explanation},
   publisher = {Free Press},
   year = {1965}
}

@book{Sal89,
   author = {Wesley C. Salmon},
   title = {Four Decades of Scientific Explanation},
   publisher = {University of Minnesota Press},
   year = {1989}
}

@ARTICLE{HP01b,
  author = {J.~Y. Halpern and J. Pearl},
  title = {Causes and explanations: a structural-model approach. {P}art {I}:
	causes},
  journal = {British Journal for Philosophy of Science},
  year = {2005},
  volume = {56},
  pages = {843--887},
  number = {4}
}

@article{beckers21c,
  author = {S. Beckers},
  title = {Causal sufficiency and actual causation},
  journal={Journal of Philosophical Logic},
  volume={50},
  year={2021},
  pages={1341--1374}
}

@inproceedings{Beckers22,
  author       = {Sander Beckers},
  title        = {Causal Explanations and {XAI}},
  booktitle    = {1st Conference on Causal Learning and Reasoning, CLeaR 2022, Sequoia
                  Conference Center, Eureka, CA, USA, 11-13 April, 2022},
  series       = {Proceedings of Machine Learning Research},
  volume       = {177},
  pages        = {90--109},
  publisher    = {{PMLR}},
  year         = {2022}
}

@INCOLLECTION{GW07,
  author = {C. Glymour and F. Wimberly},
  title = {Actual causes and thought experiments},
  booktitle = {Causation and Explanation},
  publisher = {MIT Press},
  year = {2007},
  editor = {J. Campbell and M. O'Rourke and H. Silverstein},
  pages = {43--67},
  address = {Cambridge, MA}
}

@ARTICLE{Hall07,
  author = {N. Hall},
  title = {Structural equations and causation},
  journal = {Philosophical Studies},
  year = {2007},
  volume = {132},
  pages = {109--136}
}

@ARTICLE{HP01a,
  author = {J.~Y. Halpern and J. Pearl},
  title = {Causes and explanations: a structural-model approach. {P}art {II}:
	explanations},
  journal = {British Journal for Philosophy of Science},
  year = {2005},
  volume = {56},
  pages = {889--911},
  number = {4}
}

@ARTICLE{Hitchcock07,
  author = {Hitchcock, C.},
  title = {Prevention, preemption, and the principle of sufficient reason},
  journal = {Philosophical Review},
  year = {2007},
  volume = {116},
  pages = {495--532}
}

@ARTICLE{hitchcock:99,
  author = {Hitchcock, C.},
  title = {The intransitivity of causation revealed in equations and graphs},
  journal = {Journal of Philosophy},
  year = {2001},
  volume = {XCVIII},
  pages = {273--299},
  number = {6}
}

@article{Weslake11,
  author = {B. Weslake},
  title = {A partial theory of actual causation},
  journal = {British Journal for the Philosophy of Science},
  year = {2015},
  volume={To appear}
}

@BOOK{Woodward03,
  title = {Making Things Happen: A Theory of Causal Explanation},
  publisher = {Oxford University Press},
  year = {2003},
  author = {J. Woodward},
  address = {Oxford, U.K.}
}

@article{calderon2024real,
  title={Real-Time Incremental Explanations for Object Detectors},
  author={Calder{\'o}n-Pe{\~n}a, Santiago and Chockler, Hana and Kelly, David A},
  journal={arXiv preprint arXiv:2408.11963},
  year={2024}
}

@article{winter2002shapley,
  title={The shapley value},
  author={Winter, Eyal},
  journal={Handbook of game theory with economic applications},
  volume={3},
  pages={2025--2054},
  year={2002},
  publisher={Elsevier}
}

@inproceedings{gradientshap,
 author = {Lundberg, Scott M and Lee, Su-In},
 booktitle = {Advances in Neural Information Processing Systems},
 editor = {I. Guyon and U. Von Luxburg and S. Bengio and H. Wallach and R. Fergus and S. Vishwanathan and R. Garnett},
 pages = {},
 publisher = {Curran Associates, Inc.},
 title = {A Unified Approach to Interpreting Model Predictions},
 url = {https://proceedings.neurips.cc/paper_files/paper/2017/file/8a20a8621978632d76c43dfd28b67767-Paper.pdf},
 volume = {30},
 year = {2017}
}

@article{smilkov2017smoothgradremovingnoiseadding,
      title={{SmoothGrad}: removing noise by adding noise}, 
      author={Daniel Smilkov and Nikhil Thorat and Been Kim and Fernanda Viégas and Martin Wattenberg},
      year={2017},
      eprint={1706.03825},
      journal={arXiv},
      primaryClass={cs.LG},
      url={https://arxiv.org/abs/1706.03825}, 
}

@article{kokhlikyan2020captum,
    title={Captum: A~unified and generic model interpretability library for {PyTorch}},
    author={Narine Kokhlikyan and Vivek Miglani and Miguel Martin and Edward Wang and Bilal Alsallakh and Jonathan Reynolds and Alexander Melnikov and Natalia Kliushkina and Carlos Araya and Siqi Yan and Orion Reblitz-Richardson},
    year={2020},
    eprint={2009.07896},
    journal={arXiv},
    primaryClass={cs.LG}
}

@INPROCEEDINGS{gradcamplusplus,
  author={Chattopadhay, Aditya and Sarkar, Anirban and Howlader, Prantik and Balasubramanian, Vineeth N},
  booktitle={2018 IEEE Winter Conference on Applications of Computer Vision (WACV)}, 
  title={Grad-CAM++: Generalized Gradient-Based Visual Explanations for Deep Convolutional Networks}, 
  year={2018},
  volume={},
  number={},
  pages={839-847},
}

@inproceedings{kelly2025big,
  title={I Am Big, You Are Little; I Am Right, You Are Wrong},
  author={Kelly, David and Chanchal, Akchunya and Blake, Nathan},
  booktitle={International Conference on Computer Vision},
  year={2025}
}

@article{schwalbe2024comprehensive,
  title={A comprehensive taxonomy for explainable artificial intelligence: a systematic survey of surveys on methods and concepts},
  author={Schwalbe, Gesina and Finzel, Bettina},
  journal={Data Mining and Knowledge Discovery},
  volume={38},
  number={5},
  pages={3043--3101},
  year={2024},
  publisher={Springer}
}

@inproceedings{bhusalface,
  title={FACE: Faithful Automatic Concept Extraction},
  author={Bhusal, Dipkamal and Clifford, Michael and Rampazzi, Sara and Rastogi, Nidhi},
  booktitle={The Thirty-ninth Annual Conference on Neural Information Processing Systems},
    year={2025}
}

@inproceedings{CKK25,
   title={Multiple Different Explanations for Image Classifiers},
   author={Chockler, Hana and Kelly, David A. and Kroening, Daniel},
   booktitle = {{ECAI} European Conference on Artificial Intelligence},
   year = {2025}
}

@inproceedings{Knab2024BeyondPE,
  title={Beyond Pixels: Enhancing LIME with Hierarchical Features and Segmentation Foundation Models},
  author={Patrick Knab and Sascha Marton and Christian Bartelt},
  year={2024},
  url={https://api.semanticscholar.org/CorpusID:268363283}
}

@InProceedings{kohconcept,
  title = 	 {Concept Bottleneck Models},
  author =       {Koh, Pang Wei and Nguyen, Thao and Tang, Yew Siang and Mussmann, Stephen and Pierson, Emma and Kim, Been and Liang, Percy},
  booktitle = 	 {Proceedings of the 37th International Conference on Machine Learning},
  pages = 	 {5338--5348},
  year = 	 {2020},
  editor = 	 {III, Hal Daumé and Singh, Aarti},
  volume = 	 {119},
  series = 	 {Proceedings of Machine Learning Research},
  month = 	 {13--18 Jul},
  publisher =    {PMLR},
  url = 	 {https://proceedings.mlr.press/v119/koh20a.html},
}

@article{blake2024explainable,
title = {MRxaI: Black-Box Explainability for Image Classifiers in a Medical Setting},
journal = {Ceur Workshop Proceedings},
year = {2025},
volume = {4059},
author = {Blake, N. and Kelly, D.A. and Pe{\~n}a, S.C. and Chanchal, A. and Chockler, H.}
}

@inproceedings{noisetunnel,
author = {Adebayo, Julius and Gilmer, Justin and Muelly, Michael and Goodfellow, Ian and Hardt, Moritz and Kim, Been},
title = {Sanity checks for saliency maps},
year = {2018},
publisher = {Curran Associates Inc.},
address = {Red Hook, NY, USA},
booktitle = {Proceedings of the 32nd International Conference on Neural Information Processing Systems},
pages = {9525–9536},
numpages = {12},
location = {Montr\'{e}al, Canada},
series = {NIPS'18}
}

@ARTICLE{ecssd,
  author={Shi, Jianping and Yan, Qiong and Xu, Li and Jia, Jiaya},
  journal={IEEE Transactions on Pattern Analysis and Machine Intelligence}, 
  title={Hierarchical Image Saliency Detection on Extended CSSD}, 
  year={2016},
  volume={38},
  number={4},
  pages={717-729},
  keywords={Image color analysis;Object detection;Indexes;Estimation;Computational modeling;Complexity theory;Merging;saliency detection;region scale;Saliency detection;region scale},
  doi={10.1109/TPAMI.2015.2465960}}

@misc{pascal-voc-2012,
	author = "Everingham, M. and Van~Gool, L. and Williams, C. K. I. and Winn, J. and Zisserman, A.",
	title = "The {PASCAL} {V}isual {O}bject {C}lasses {C}hallenge 2012 {(VOC2012)} {R}esults",
	howpublished = "http://www.pascal-network.org/challenges/VOC/voc2012/workshop/index.html"}
\vspace{5mm}
\begin{appendices}
    \crefalias{section}{appsec}
    \section{Our Definitions in the Actual Causality Landscape}\label{app:exp}
For the ease of presentation, we first restate our definition of explanation.
\expdef*

The following is the latest definition of actual cause due to Halpern~\citep{Hal15} (see \citep{HP01b} for the original definition).
\dfn\label{def:AC}[Actual cause~\citep{Hal15}]
$\vec{X} = \vec{x}$ is 
an \emph{actual cause} of $\varphi$ in $(M,\vec{u})$ if the
following three conditions hold: 
\begin{description}
\item[{\rm AC1.}]\label{ac1} $(M,\vec{u}) \models (\vec{X} = \vec{x})$ and $(M,\vec{u}) \models \varphi$. 
\item[{\rm AC2.}] There is a
  a setting $\vec{x}'$ of the variables in $\vec{X}$, a 
(possibly empty)  set $\vec{W}$ of variables in $\V - \vec{X}'$,
and a setting $\vec{w}$ of the variables in $\vec{W}$
such that $(M,\vec{u}) \models \vec{W} = \vec{w}$ and
$(M,\vec{u}) \models [\vec{X} \gets \vec{x}', \vec{W} \gets
    \vec{w}]\neg{\varphi}$, and moreover
\item[{\rm AC3.}] \label{ac3}\index{AC3}  
  $\vec{X}$ is minimal; there is no strict subset $\vec{X}'$ of
  $\vec{X}$ such that $\vec{X}' = \vec{x}''$ can replace $\vec{X} =
  \vec{x}'$ in 
  AC2, where $\vec{x}''$ is the restriction of
$\vec{x}'$ to the variables in $\vec{X}'$.
\end{description}
\edfn

The following lemma states their equivalence in our setting.
\begin{lemma}\label{lemma-equiv}
Given a image classifier $\mathcal{N}$, an input $x$, and a causal model $M$ derived from them as
described in \Cref{sec:theory}, a subset $\vec{V}_{exp}$ of $\vec{V}$ is a single-context explanation of
$\mathcal{N}(x)$ according to \Cref{defn:simple-exp} iff $\vec{V}_{exp} = 0$ (that is, all variables in $\vec{V}_{exp}$ have the value $0$)
is an actual cause of $O=0$ in $(M,\vec{u}_0)$ according to
\Cref{def:AC}, under the assumption that a fully masked image has a different classification than $x$.
\end{lemma}
\begin{proof}
  First we observe that AC1 simply states that in the current context $\vec{u}_0$, the value of all variables in $\vec{V}$ is $0$, and the
  classification of the fully masked image is different from  $\mathcal{N}(x)$.
  Assume that EXIM1 holds. Then, assigning $\vec{V}_{exp}$ to $1$ results in restoring the original classification $\mathcal{N}(x)$, and hence $O=1$, hence
  AC2 holds with $\vec{W} = \emptyset$. For the other direction, assume that AC2 holds. Since the variables in $\vec{V}$ are causally independent, changing
  $\vec{V}_{exp}$ to $1$ does not change any other input variable, and hence we can take $\vec{W} = \emptyset$. Then, AC2 implies EXIM1.
  Finally, AC3 is simply the minimality condition, and is equivalent to EXIM2.
\end{proof}

We note that the theory of actual causality includes the notion of explanations as well~\citep{HP01a}, but these are defined over a \emph{set} of contexts, rather than
a single context, so are not suitable for our purposes. We can, in general, imagine a setting in which we are searching for a 
subset of pixels of the input
image that is sufficient to result in the original classification, regardless of the values of the rest of the pixels---this 
would be  similar to the 
setting explored in Anchors~\citep{anchors}. However, such a setting leads to significantly less intuitive results for explaining image classification, as also
evident from the results of applying Anchors to images.

Let us now place our definition of sufficient responsibility into the actual causality landscape.
For the ease of presentation, we first restate the definition.
\respdef*

In the literature, the degree of responsibility is defined by Chockler and Halpern~\citep{CH04} as the quantification of actual causality, hence its
precise definition is tied to the definition of the actual cause. The latest version is as follows.
\dfn\label{def:AR}[Degree of responsibility~\citep{Hal19}]
For $\vec{X} = \vec{x}$ being
an \emph{actual cause} of $\varphi$ in $(M,\vec{u})$, the degree of responsibility of $\vec{X} = \vec{x}$
for the value of $\varphi$ in $(M,\vec{u})$ is defined as
\[ dr(\vec{X} = \vec{x}, \varphi, M,\vec{u}) = \frac{1}{|\vec{X}|+|\vec{W}|},\]
for the smallest $|\vec{X}|+|\vec{W}|$ that satisfy \Cref{def:AC}.
\edfn

The following is immediate from the definitions, given that in our case $\vec{W} = \emptyset$.
\begin{proposition}\label{prop-resp}
Given a image classifier $\mathcal{N}$, an input $x$, a causal model $M$ derived from them as
described in \Cref{sec:theory}, and a single-context explanation $\vec{V}_{exp} \subseteq \vec{V}$ of
$\mathcal{N}(x)$ according to \Cref{defn:simple-exp}, the degree of sufficient responsibility of $\vec{V}_{exp}$
for $\varphi$ in $M$ is the degree of responsibility of $O=0$ in $(M,\vec{u}_0)$ 
according to \Cref{def:AR}, under the assumption that a fully masked image has a different classification than $x$.
\end{proposition}
Indeed, the degree of sufficient responsibility of $\vec{V}_{exp}$ is $\frac{1}{\vec{V}_{exp}}$, which is exactly
the degree of responsibility according to \Cref{def:AR} for $(M,\vec{u}_0)$, given that in our model $M$ there is
no dependency between the variables.

\section{The Complexity of Our Definitions}\label{app:comp}

In order to analyze the complexity of our definition of single-context explanation, we first need to introduce a complexity class.
\citet{PY84} introduced the complexity class $DP$, which consists of all languages $L$ such that
there exists a language $L_1$ in $NP$ and a language $L_2$ in co-$NP$ such 
that $L = L_1 \cap L_2$.
They showed that a number of problems of interest were $DP$ complete.

For the complexity discussion below, we use the classifier $\mathcal{N}$ as an oracle, not taking its complexity into account. Instead, we assume that
the value of $O$ is computed in polynomial time using the values of variables in $\vec{V}$. Since all variables in our causal models are binary,
the computation is simply a Boolean formula.

Halpern proved that the decision problem of actual causality as per \Cref{def:AC} is $DP$-complete, 
and the result holds for binary models as well~\citep{Hal15}. 
The proof, however, relies on causal models having causal dependencies between their variables. As we show below,
our definition, while still intractable, is in a lower complexity class. But first we observe that for singletons, our definition is computable in polynomial time.

\begin{proposition}
If $\vec{V}_{exp}$ is a single variable $V$, the complexity of deciding whether it satisfies \Cref{defn:simple-exp} is polynomial in the size of the input.
\end{proposition}
\begin{proof}
    The proof is straightforward by observing that EXIM1 is checkable in polynomial time, and EXIM2 holds trivially, as singletons have no subsets.
\end{proof}

We note that the decision problem of actual cause is $NP$-complete for singletons~\citep{Hal15}, indicating that our setting significantly simplifies the problem. The reduction in complexity also holds for the general case, as proved in the
following theorem.

\begin{theorem}\label{theor:exp}
The decision problem of explanations in \Cref{defn:simple-exp} is co-$NP$-complete.
\end{theorem}
\begin{proof}
For the membership in co-$NP$, we prove that the complementary problem, of showing that a given  $\vec{V}_{exp}$ is not an explanation, is in $NP$.
Given a set $\vec{V}_{exp}$, checking whether it satisfies EXIM1 is polynomial. 
For refuting EXIM2, it suffices to find one witness subset 
$\vec{V}' \subset \vec{V}$ that satisfies EXIM1, hence the complementary problem is indeed in $NP$.

For the co-NP-hardness, we show a reduction from the classic 
co-$NP$-complete problem $UNSAT = \{$ unsatisfiable propositional Boolean formulae $\}$.
Given a propositional Boolean formula $\varphi$ over the set of variables $X_1, \ldots, X_n$, we construct a depth-$2$ binary causal model $M_\varphi$ as follows. 
The set of endogenous variables $\V$ is the set $X_1, \ldots, X_n, O$, where the values of $X_1, \ldots, X_n$ are determined directly by the exogenous variables,
and 
\[ O = (\varphi \vee \ \bigwedge_{i=1}^n X_i) \wedge (\bigwedge_{i=1}^n X_i \rightarrow \neg{\varphi}). \]\label{equ:complexity}
Essentially, the value of $O$ is an exclusive OR of the value of $\varphi$ and the conjunction of all variables of $\varphi$.
Then, $\varphi \in UNSAT$ iff the set $\vec{X} = \{X_1, \ldots X_n\}$ is a single-context explanation for $O=0$ in $(M,\vec{u}_0)$.

First, we claim that $O$ has the value $1$ only in the assignment that sets all variables in $\vec{X}$ to $1$ 
iff $\varphi$ is unsatisfiable.
Indeed, if $\varphi$ is unsatisfiable, then its value is $0$ under all assignments, and the value of the conjunction is $1$ iff
all variables in $\vec{X}$ are set to $1$. 
For the other direction, if the value of $O$ is $1$ under the assignment that sets all variables in $\vec{X}$ to $1$,
then $\varphi$ is falsified under this assignment, and since this is the only assignment that results in $O=1$, 
$\varphi$ is falsified under
all other assignments to $\vec{X}$ as well, hence $\varphi$ is unsatisfiable.

We are now ready to prove the reduction.
Assume first that $\varphi \in UNSAT$. Then, by the previous discussion, $(M,\vec{u}_0) \models [\vec{X} \leftarrow 1](O=1)$,
satisfying EXIM1, and $\vec{X}$ is the only such subset of variables, satisfying EXIM2. Therefore, $\vec{X}$ is a single-context explanation
to $O$ in $M_\varphi$, as required.

For the other direction, assume that $\vec{X}$ is a single-context explanation to $O$ in $M_\varphi$. Then, 
$(M,\vec{u}_0) \models [\vec{X} \leftarrow 1](O=1)$, and no subset of $\vec{X}$ satisfies this condition, hence by the previous discussion,
$\varphi \in UNSAT$, completing the proof.
\end{proof}
There is, therefore, little hope to find an efficient algorithm for computing exact explanations for image classification, and the answer to \textbf{RQ2} wrt 
precise explanations is `no', as computing explanations is intractable. 

While this is not directly relevant to this paper, the following observation is a direct corollary from \Cref{lemma-equiv} and \Cref{theor:exp}.
\begin{observation}
In depth-2 binary causal models, the decision problem of actual causality is co-$NP$-complete.
\end{observation}

We now turn to examining the complexity of the degree of sufficient responsibility. First, we need to introduce a new complexity hierarchy, specifically,
the hierarchy of functional problems. This is because the output of the degree of the responsibility computation is a value, and not an accept/reject decision.
For a complexity class $A$, the class \fpa consists of all functions that can be computed 
by a polynomial-time Turing machine with an oracle for a problem in
$A$, which on input $x$ asks a total of $O(\log{|x|})$ queries 
(cf.~\citep{Pap84}).
A function $f(x)$ is \fpa\-hard iff for every function $g(x)$
in \fpa \ there exist polynomially computable functions
$R,S: \Sigma^* \rightarrow \Sigma^*$ (where $\Sigma$ is the common alphabet)
such that $g(x) = S(f(R(x)))$. A function $f(x)$ is complete in \fpa iff
it is in \fpa and is \fpa-hard.

We prove the following result.
\begin{theorem}\label{theor:resp}
The problem of computing the degree of sufficient responsibility is \fp-complete.
\end{theorem}
\begin{proof}
The proof is very similar to the proof of complexity of the degree of responsibility by~\citet{CH04}, so we present only the key points here.

For the membership in \fp, we observe that the polynomial time algorithm for computing the degree of responsibility can perform a binary search on the size of
$\vec{V}_{exp}$, in each step querying an oracle for the existence of a single-context explanation $\vec{V}_{exp}$ of size $k$. By ~\Cref{theor:exp}, the
decision problem of the existence of explanation is co-$NP$-complete, and hence can be decided by an $NP$-oracle.

For hardness in \fp, the reduction is from the \fp-complete function problem $MINSAT$, defined in~\citep{CH04} as $MINSAT(\varphi) = k$, where $k$ is the minimal number of $1$'s
in a satisfying assignment for $\varphi$. The reduction is the same as in~\citep{CH04}.
\end{proof}

\Cref{theor:resp} shows that computing the degree of sufficient responsibility in our setting is intractable, hence the need for approximation algorithms.

\end{appendices}

\end{document}